\newtheorem*{theorem*}{Theorem}
\theoremstyle{plain}
\newtheorem{theorem}{Theorem}[section]
\newtheorem{lemma}[theorem]{Lemma}
\theoremstyle{definition}
\theoremstyle{remark}
\newif\ifarxiv
\newif\ifcomm
	\newcommand{\mycomm}[3]{{\footnotesize{{\color{#2} \textbf{[#1: #3]}}}}}
	\newcommand{\CRdel}[1]{\textcolor{red}{\sout{#1}}}
    \newcommand{\mycomm}[3]{}
    \newcommand{\CRdel}[1]{}
\newcommand{\RBB}[1]{\mycomm{Ran}{purple}{#1}} 
\newcommand{\SV}[1]{\mycomm{Shay}{brown}{#1}} 
\newcommand{\AP}[1]{\mycomm{Amit}{red}{#1}} 
\newcommand{\ran}[1]{\RBB{#1}}
\newcommand{\T}[1]{\noindent\textbf{#1}}
\DeclareMathOperator*{\minimize}{minimize}
\newcommand{\sender}{client\xspace}
\newcommand{\Sender}{Client\xspace}
\newcommand{\senders}{clients\xspace}
\newcommand{\receiver}{server\xspace}
\newcommand{\Receiver}{Server\xspace}
\newcommand{\vnmse}{\ensuremath{\mathit{vNMSE}}\xspace}
\newcommand{\nmse}{\ensuremath{\mathit{NMSE}}\xspace}
\newcommand{\alg}{{QUIC-FL}\xspace}
\newcommand{\norm}[1]{\left\lVert#1\right\rVert}
\newcommand{\parentheses}[1]{\left(#1\right)}
\newcommand{\angles}[1]{\left\langle#1\right\rangle}
\newcommand{\brackets}[1]{\left[#1\right]}
\newcommand{\set}[1]{\left\{#1\right\}}
\newcommand{\abs}[1]{\left|#1\right|}
\newcommand*\samethanks[1][\value{footnote}]{\footnotemark[#1]}
\newcommand{\E}{\mathbb{E}}
\newcommand\footnoteref[1]{\protected@xdef\@thefnmark{\ref{#1}}\@footnotemark}
\title{\resizebox{0.9996\textwidth}{!}{\alg: Quick Unbiased Compression for Federated Learning}}
\author{
\hspace{-5mm}Ran Ben-Basat \samethanks[1]\\
\hspace{-5mm}University College London\\
\hspace{-5mm}\texttt{r.benbasat@ucl.ac.uk}
\And
\hspace{7mm}Shay Vargaftik \thanks{Equal Contribution.}\\
\hspace{7mm}VMware Research \\
\hspace{7mm}\texttt{shayv@vmware.com}
\AND
\hspace{2mm}Amit Portnoy \samethanks[1]\\
\hspace{2mm}Ben-Gurion University of the Negev \\
\hspace{2mm}\texttt{amitport@post.bgu.ac.il}
\And
Gil Einziger \\
Ben-Gurion University of the Negev \\
\texttt{gilein@bgu.ac.il}
\AND
\hspace{6mm}Yaniv Ben-Itzhak \\
\hspace{6mm}VMware Research \\
\hspace{6mm}\texttt{ybenitzhak@vmware.com}
\And
\hspace{5mm}Michael Mitzenmacher \\
\hspace{5mm}Harvard University \\
\hspace{5mm}\texttt{michaelm@eecs.harvard.edu}
}
\begin{document}

\maketitle

\vspace*{-5mm}
\begin{abstract}
\vspace*{-1mm}
Distributed Mean Estimation (DME), in which $n$ clients communicate vectors to a parameter server that estimates their average, is a fundamental building block in communication-efficient federated learning.
In this paper, we improve on previous DME techniques that achieve the optimal $O(1/n)$ Normalized Mean Squared Error (NMSE) guarantee by asymptotically improving the complexity for either encoding or decoding (or both). 
To achieve this, we formalize the problem in a novel way that allows \mbox{us to use off-the-shelf mathematical solvers to design the quantization.}
\end{abstract}

\vspace*{-4mm}
\section{Introduction} \label{sec:intro}
\vspace*{-1mm}

Federated learning~\cite{mcmahan2017communication,kairouz2019advances}, 
is a technique to train models across multiple clients without having to share their data.
During each training round, the participating clients send their model updates (hereafter referred to as gradients) to a parameter server that calculates their mean and updates the model for the next round. 
Collecting the gradients from the participating clients is often communication-intensive, which implies that the network becomes a bottleneck. Thus, many works focus on reducing communication overheads utilizing compression. Typically, the clients reduce bandwidth by sending compact approximations of their gradients, which implies that the parameter server only achieves an approximation of the mean. 
Such methods offer tradeoffs between the required bandwidth, \mbox{computational efficiency, and estimation accuracy.} 

Formally, the \emph{Distributed Mean Estimation (DME)} problem is defined as follows.
Consider $n$ clients with $d$-dimensional vectors (e.g., gradients) to report; each client sends an approximation of its vector to a parameter server (hereafter referred to as `server') which estimates the vectors' mean, e.g., see \citet{pmlr-v70-suresh17a,konevcny2018randomized,vargaftik2021drive,davies2021new,EDEN}). 
We briefly survey the most relevant and recent related works for DME.  Common to these techniques is that they preprocess the input vectors into a different representation that allows for better lossy compression, generally through quantization of the coordinates.  

For example, in~\citet{pmlr-v70-suresh17a}, each client, in $O(d \cdot \log d)$ time, uses a Randomized Hadamard Transform (RHT) to preprocess its vector and then applies stochastic quantization. 
The transformed vector has a smaller coordinate range (in expectation), which reduces the quantization error. 
The server then aggregates the transformed vectors before applying the inverse transform to estimate the mean, for a total of $O(n\cdot d + d\cdot\log d)$ time.
Such a method has a Normalized Mean Squared Error (\nmse) that is bounded by $O\parentheses{\log d /n}$ using $O(1)$ bits per coordinate. Hereafter, we refer to this method as `Hadamard'.
This work also suggests an alternative method that uses entropy encoding to achieve an NMSE of $O(1/n)$, which is optimal. However, entropy encoding is a compute-intensive process \mbox{that does not efficiently translate to GPU execution, resulting in a slow decode time.}

A different approach to DME computes the Kashin's representation~\cite{lyubarskii2010uncertainty} of a client's vector $\overline x$ before applying quantization~\cite{caldas2018expanding,safaryan2020uncertainty}.
Intuitively, this replaces the $d$-dimensional input vector by $O(d)$ coefficients, each bounded by $O({\norm{\overline x}_2}/{\sqrt d})$. Applying quantization to the coefficients instead of the original vectors allows the \receiver to estimate the mean using $O(1)$ bits per coordinate with an $O(1/ n)$ \nmse. However, computing the coefficients requires applying multiple RHTs, asymptotically slowing down its \emph{encoding} time from Hadamard's $O(d\cdot \log d)$ to $O(d\cdot\log d\cdot\log(n\cdot d))$.

The works of~\citet{vargaftik2021drive,EDEN} transform the input vectors in the same manner as~\citet{pmlr-v70-suresh17a}, but with two differences: (1) clients must use independent transforms; (2) clients use deterministic (\emph{biased}) quantization, derived using existing information-theoretic tools like the Lloyd-Max quantizer, on their transformed vectors. Interestingly, the server still achieves an \emph{unbiased} estimate of each client's input vector after multiplying the estimated vector by a real-valued `scale' (that is sent by the client) and applying the inverse transform. Using uniform random rotations, which RHT approximates, such a process achieves $O(1/n)$ \nmse and is empirically more accurate than Kashin's representation. With RHT, their encoding complexity is $O(d\cdot\log d)$, matching that of~\citet{pmlr-v70-suresh17a}. 
However, since the clients transform their vectors independently of each other (and thus the server must invert their transforms individually, i.e., perform $n$ inverse transforms), the decode time is \mbox{asymptotically increased to $O(n\cdot d\cdot \log d)$ compared to Hadamard's $O(n\cdot d + d\cdot \log d)$.}

While the above methods suggest aggregating  the gradients directly using DME, recent works leverage it as a building block. For example, in EF21~\cite{richtarik2021ef21}, each client sends the compressed difference between its local gradient and local state, and the server estimates the mean to update the global state.
Similarly, DIANA~\cite{mishchenko2019distributed} uses DME to estimate the average gradient difference. Thus, better DME techniques can improve their performance (see Appendix~\ref{app:ef21_topk_qfl}).
{We defer further discussion of frameworks that use DME as a building block to Appendix~\ref{app:extended_RW}.


\looseness=-1
In this work, we present \textbf{Q}uick \textbf{U}nb\textbf{i}ased \textbf{C}ompression for \textbf{F}ederated \textbf{L}earning  (\alg), a DME method with $O(d\cdot \log d)$ encode and $O(n\cdot d + d\cdot \log d)$ decode times, and the optimal $O(1/n)$ \nmse.
As summarized in~\cref{tbl:asymptotics}, \alg{} asymptotically improves over the best encoding and/or 
\mbox
{decoding times of techniques with this \nmse guarantee. }

In \alg, 
each client applies RHT and quantizes its transformed vector using an \emph{unbiased} method we develop to minimize the quantization error. Critically, all clients use \emph{the same} transform, thus allowing the server to aggregate the results before applying a single inverse transform. 
\alg's quantization features two new techniques; first, we present Bounded Support Quantization (BSQ), where clients send a small fraction of their largest (transformed) coordinates exactly, thus minimizing the difference between the largest quantized coordinate and the smallest one and thereby the quantization error.
Second, we design a near-optimal distribution-aware unbiased quantization. To the best of our knowledge, such a method is not known in the information-theory 
\mbox
{literature and may be of independent interest.}

\begin{table*}[]
\centering
\resizebox{\textwidth}{!}{
\begin{tabular}{|l|l|l|l|}
\hline
\textbf{Algorithm}                                                                                     & \textbf{Enc. complexity}                         & \textbf{Dec. complexity}             & \textbf{NMSE}                           \\ \hline
\textbf{QSGD} \cite{NIPS2017_6c340f25}                                                   & $O(d)$                                  & $O(n\cdot d)$               & $O(d/n)$                       \\ \hline
\textbf{Hadamard} \cite{pmlr-v70-suresh17a}                                               & $O(d\cdot \log d)$                      & $O(n\cdot d+d\cdot \log d)$ & $O(\log d /n)$                 \\ \hline
\textbf{Kashin} \cite{caldas2018expanding,safaryan2020uncertainty} & $O(d\cdot \log d\cdot \log (n\cdot d))$ & $O(n\cdot d+d\cdot \log d)$ & $O(1 /n)$                      \\ \hline
\textbf{EDEN} \cite{EDEN}                                                                 & $O(d\cdot \log d)$                      & $O(n\cdot d \cdot \log d)$  & $O(1 /n)$                      \\ \hline
{\textbf{QUIC-FL (New)}}                                                                                 & $O(d\cdot \log d)$                      & $O(n\cdot d+d\cdot \log d)$ & \multicolumn{1}{l|}{$O(1 /n)$} \\ \hline 
\end{tabular}
}
\vspace*{-3.2075mm}
\caption{
Unbiased DME algorithms' guarantees (without variable-length encoding; see Appendix~\ref{app:extended_RW}) using $b=O(1)$ bits per coordinate and using the Hadamard transform for rotation-based algorithms. }
\label{tbl:asymptotics}
\vspace*{-2.48058018509764mm}
\end{table*}

\begin{figure}
\centering
  \begin{center}
    \includegraphics[width=.61235799\columnwidth]{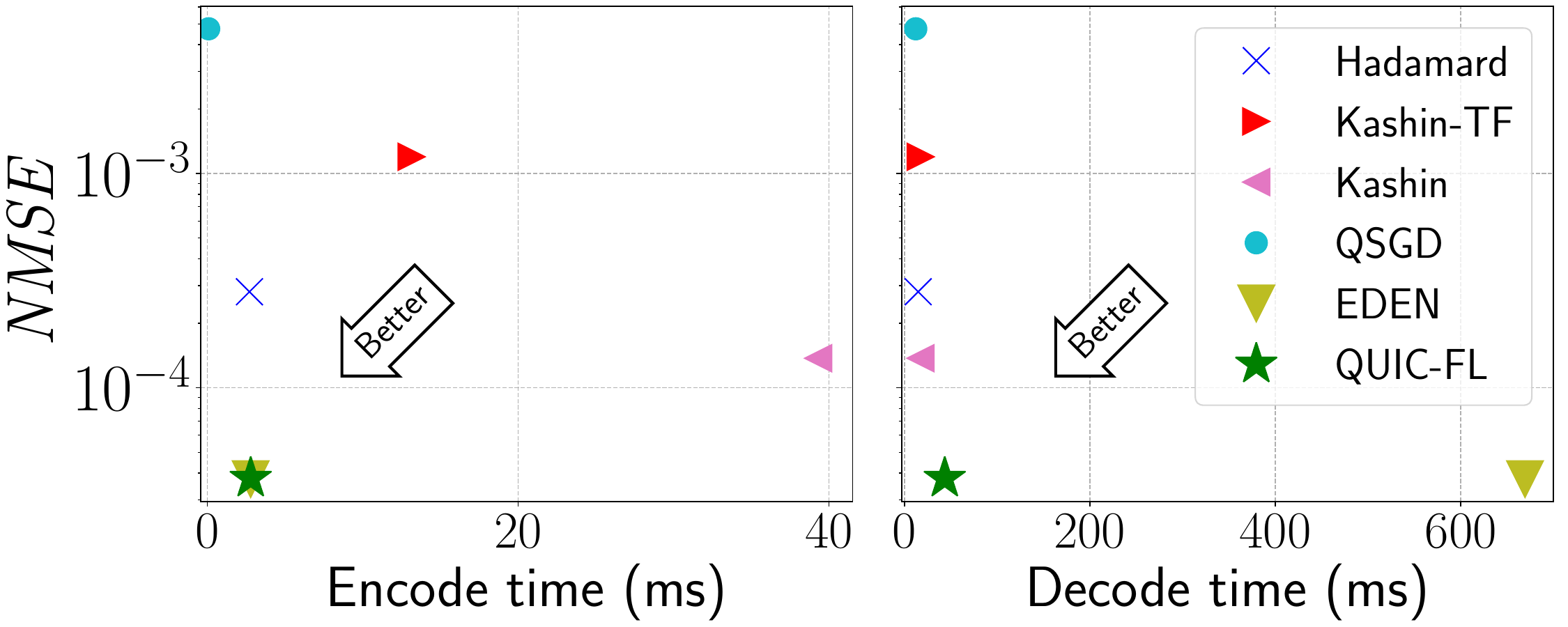}
  \end{center}
  \vspace*{-4.525mm}
  \caption{Normalized Mean Squared Error vs. processing time. \SV{maybe its better to split into two plots... encode and decode separately}}
  \label{fig:intro_example}
  \vspace*{-5.00963mm}
\end{figure}

{
We implement \alg in PyTorch~\cite{NIPS2019_9015} and TensorFlow~\cite{tensorflow2015-whitepaper} and evaluate it on different FL tasks (\cref{sec:evaluation}). We show that \alg can compress vectors with over 33 million coordinates within 44 milliseconds and is markedly more accurate than existing $O(n\cdot d)$ and $O(n\cdot d + d\cdot \log d)$ 
decode time \mbox{approaches} such as QSGD~\cite{NIPS2017_6c340f25},  Hadamard~\cite{pmlr-v70-suresh17a}, and Kashin~\cite{caldas2018expanding,safaryan2020uncertainty}.
Compared with DRIVE~\cite{vargaftik2021drive} and EDEN~\cite{EDEN}, \alg has a competitive NMSE while asymptotically improving the estimation time, as shown in \cref{fig:intro_example}. The figure illustrates the encode and decode times vs. NMSE for $b=4$ bits per coordinate, $d=2^{20}$ dimensions, and $n=256$ clients. Our code will be released as open source upon publication.
}

\SV{point to the extended related work here or in the relevant section...}


\section{Preliminaries}\label{sec:Preliminaries}

\paragraph{Notation.} Capital letters denote random variables (e.g., $I_c$) or functions (e.g., $T(\cdot)$); overlines denote vectors (e.g., $\overline x_c$); calligraphic letters stand for sets (e.g., $\mathcal X_b)$ with the exception of $\mathcal N$ and $\mathcal U$ that denote the normal and uniform distributions; and hats denote estimators (e.g., $\widehat {\overline x}_{\mathit{avg}}$).

\paragraph{Problems and Metrics.}
Given a nonzero vector ${\overline x\in\mathbb R^d}$, a vector compression protocol consists of a \sender that sends a message to a \receiver that uses it to estimate $\widehat {\overline x}\in\mathbb R^d$. The \emph{vector Normalized Mean Squared Error} (\vnmse) of the protocol is defined as $ \frac{\E\brackets{\norm{\widehat {\overline x}- \overline x}_2^2}}{\norm{\overline x}_2^2}~~.$


The above generalizes to Distributed Mean Estimation (DME), where each of $n$ \senders{} has a nonzero vector ${\overline x_c\in\mathbb R^d}$, where $c\in\set{0,\ldots,n-1}$,  that they compress and communicate to a \receiver. We are interested in minimizing {the \emph{Normalized Mean Squared Error} (\nmse), defined as $ \frac{\E\brackets{\norm{\widehat {\overline x}_{\mathit{avg}} - \frac{1}{n}\sum_{c=0}^{n-1} \overline x_c}_2^2}}{\frac{1}{n}\cdot\sum_{c=0}^{n-1}\norm{\overline x_c}_2^2}~~,$}
%
%
where $\widehat {\overline x}_{\mathit{avg}}$ is our estimate of the average $\frac{1}{n}\cdot \sum_{c=0}^{n-1}  \overline x_c$.
For unbiased algorithms and independent estimates, we that $\nmse=\vnmse/n$.

\vspace{-2mm}
\paragraph{Randomness.}
We use \textit{global} (common to all clients and the server) and \textit{client-specific} shared randomness (one client and server). 
\mbox{Client-only randomness is called \textit{private}.}
\vspace{-2mm}


\section{The \alg Algorithm} \label{sec:alg}

We first describe our design goals in \cref{sec:alg:design_goals}. Then, in \cref{sec:alg:bsq,sec:alg:opt_sq}, we successively present two new tools we have developed to achieve our goals, namely, \emph{bounded support quantization} and \emph{distribution-aware unbiased quantization}. In \cref{sec:alg:alg}, we present \alg's pseudocode and discuss its properties and guarantees. Finally, in \cref{sec:alg:scary_stuff_no_one_will_understand}, we overview additional optimizations.

\subsection{Design Goals}\label{sec:alg:design_goals}

We aim to develop a DME technique that requires less computational overhead while achieving the same accuracy at the same compression level as the best previous techniques.

As shown by recent works \cite{pmlr-v70-suresh17a,lyubarskii2010uncertainty,caldas2018expanding,safaryan2020uncertainty,vargaftik2021drive,EDEN}, a preprocessing stage that transforms each client's vector to a vector with a different distribution (such as applying a uniform random rotation or RHT) can lead to smaller quantization errors and asymptotically lower \nmse. However, in existing DME techniques that achieve the asymptotically optimal \nmse of $O(1/n)$, such preprocessing incurs a high computational overhead on either the clients (i.e., \citet{lyubarskii2010uncertainty,caldas2018expanding,safaryan2020uncertainty}) or the server (i.e., \citet{lyubarskii2010uncertainty,caldas2018expanding,safaryan2020uncertainty,vargaftik2021drive,EDEN}). The question is then how to preserve the appealing \nmse of $O({1}/{n})$ but reduce the computational burden?

In \alg, similarly to previous DME techniques, we use a preprocessing stage\footnote{For simplicity, we first consider a uniform random rotation and then, in \cref{sec:alg:scary_stuff_no_one_will_understand}, move to the computationally efficient RHT instead, while preserving the guarantees specified in~\cref{tbl:asymptotics}.} where each client applies a uniform random rotation on its input vector. After the rotation, the coordinates' distribution approaches independent normal random variables for high dimensions~\cite{vargaftik2021drive}. We use our knowledge of the resulting distribution to devise a fast and near-optimal unbiased quantization scheme that both preserves the appealing $O(1/n)$ \nmse guarantee and is asymptotically faster than existing DME techniques with similar \nmse guarantees.  A particularly important aspect of our scheme is that we can avoid decompressing each client's compressed vector at the server by having all clients use the same rotation (determined by shared randomness), so that the server can directly sum the compressed results and perform a single inverse rotation.  

\vspace{-1mm}
\subsection{Bounded support quantization}\label{sec:alg:bsq}

Our first contribution is the introduction of \emph{bounded support quantization} (BSQ).
For a parameter $p\in(0,1]$, we pick a threshold $t_p$ such that up to $d\cdot p$ values can fall outside $[-t_p,t_p]$.
BSQ separates the vector into two parts: the
small values in the range $[-t_p,t_p]$, and the remaining (large) values.  The large values are sent exactly (matching the precision of the input), whereas the small values are stochastically quantized and sent using a small number of bits each. This approach decreases the error of the quantized \mbox{values by bounding their support at the cost of sending a small number of values exactly.}

For the exactly sent values, we also need to send their indices. There are different ways to do so. For example, it is possible to encode these indices using $\log{\binom{d}{d\cdot p}}\approx d\cdot p\cdot \log(1/p)$ bits at the cost of higher complexity.  When the $d\cdot p$ indices are uniformly distributed (which will be essentially our case later), then delta coding methods can be applied (see, e.g., Section 2.3 of \citet{DBLP:journals/pvldb/VaidyaKCKMI22}).  Alternatively, we can send these indices without any additional encoding using $d\cdot p\cdot \lceil \log d \rceil$ bits (i.e., $\lceil \log d \rceil$ bits per transmitted index) or transmit a bit-vector with an indicator for each value whether it is exact or quantized. Empirically, sending the indices using $\lceil \log d \rceil$ bits each
without encoding is most useful, as $p\cdot \log d \ll 1$ in our settings, resulting \mbox{in fast processing time and small bandwidth overhead.   }

In Appendix~\ref{app:BSQ}, we prove that BSQ, without further assumptions, admits a worst-case \nmse of $ \frac{1}{n\cdot p\cdot\parentheses{2^b-1}^2}$ when using $b$ bits per quantized value. In particular, when $p$ and $b$ are constants, 
we get an \nmse of $O(1/n)$ with encoding and decoding times of $O(d)$ and $O(n\cdot d)$, respectively.

However, the linear dependence on $p$ means that the hidden constant in the $O(1/n)$ \nmse is often impractical. For example, if $p=2^{-5}$ and $b=1$, we need three bits per value on average: two for sending the exact values and their indices (assuming values are single precision floats and indices are 32-bit integers) and another for stochastically quantizing the remaining values using 1-bit stochastic quantization. \mbox{In turn, we get an \nmse bound of $\frac{1}{n \cdot 2^{-5}\cdot\parentheses{2^1-1}^2}=32/n$.}

In the following, we show that combining BSQ with our chosen random rotation preprocessing allows us to get an $O(1/n)$ \nmse with a much lower constant for small values of $p$. For example, a basic version of \alg{} with  $p=2^{-9}$ and $b=1$
can reach an $\nmse$ of $8.58/n$, a $3.72\times$ improvement despite using $2.66\times$ less bandwidth (i.e., $1.125$ bits per value instead of $3$).

\vspace*{-1.62mm}
\subsection{Distribution-aware unbiased quantization}\label{sec:alg:opt_sq}
\vspace*{-.5mm}
The first step towards our goal involves randomly rotating and scaling an input vector and then using BSQ to send values (rotated and scaled coordinates) outside the range $[-t_p, t_p]$ exactly. 
The values in the range $[-t_p, t_p]$ are sent using stochastic quantization, which ensures unbiasedness for any choice of quantization-values that cover that range.
Now we seek quantization-values that minimize the estimation variance and thereby the \nmse. 
We take advantage of the fact that,
after randomly rotating a vector $\overline x \in \mathbb{R}^d$ and scaling it by $\sqrt d / \norm{\overline x}_2$, the rotated and scaled coordinates approach the distribution of independent normal random variables $\mathcal N(0,1)$ as $d$ increases~\cite{vargaftik2021drive,EDEN}. Thus we choose to optimize the quantization-values for the normal distribution and later show that it yields a near-optimal quantization for the actual rotated coordinates.
That is, since we know both the distribution of the coordinates after the random rotation and scaling and we know the range of the values we are stochastically quantizing, we can design an unbiased quantization scheme that is optimized for this \mbox{specific distribution rather than using, e.g., the standard approach of uniformly sized intervals.}

Formally, for $b$ bits per quantized value and a BSQ parameter $p$, we find the set of quantization-values $\mathcal Q_{b,p}$ that minimizes the estimation variance of the random variable $Z \mid Z \in [-t_p, t_p]$ where $Z \sim \mathcal{N}(0,1)$, after stochastically quantizing it to a value in $\mathcal Q_{b,p}$ (i.e., the quantization is unbiased). 
Then, we \mbox{show how to use this precomputed set of quantization-values $\mathcal Q_{b,p}$ on any preprocessed vector. }

Consider parameters $p$ and $b$ and let $\mathcal X_b=\set{0,\ldots,2^b{-}1}$. Then, for a \textit{message} $x \in \mathcal X_b$, we denote by $S(z,x)$ the probability that the \textit{sender} quantizes a value $z \in [-t_p, t_p]$ to $R(x)$, the value that the \textit{receiver} associates with $x$. With these notations at hand, we solve the following optimization problem to find the set $\mathcal Q_{b,p}$ that minimizes the estimation variance (we are omitting the constant factor $1/\sqrt{2 \pi}$ in the normal distribution's pdf from the minimization as it does not affect the solution): 

\hspace{2mm}
\resizebox{.995\columnwidth}{!}{
$
\begin{array}{l}
\displaystyle{\minimize_{S,R}}  \displaystyle \int_{-t_p}^{t_p} \sum_{x\in \mathcal X_b} S(z,x) \cdot \parentheses{z-R(x)}^2 \cdot e^{\frac{-z^2}{2}} dz \qquad
\text{subject to}\\\medskip
{\small (\textit{\textcolor{gray}{Unbiasedness}})} \displaystyle\ \  \sum_{x \in \mathcal X_b} S(z,x) \cdot R(x) = z  \quad\forall\, z \in [-t_p, t_p]\\\medskip
{\small (\textit{\textcolor{gray}{Probability}})}\,\ \quad \displaystyle \sum_{x\in \mathcal X_b}S(z,x)=1 \quad \forall\, z \in [-t_p, t_p] \, , \qquad
S(z,x)\ge0 \quad \forall\, z \in [-t_p, t_p],\, x\in \mathcal X_b\\
\end{array}
$
}


Observe that $\mathcal Q_{b,p} = \set{R(x) \mid x\in\mathcal X_b}$ is the set of quantization-values that we are seeking. 

While there exist solutions to this problem \emph{excluding} the unbiasedness constraint (e.g., the Lloyd-Max Scalar Quantizer~\cite{lloyd1982least,max1960quantizing}), we are unaware of existing methods for solving the above problem analytically. 
Instead, we propose a discrete relaxation, allowing us to approach the problem with a \emph{solver}.\footnote{We use the Gekko~\cite{beal2018gekko} software package that provides a Python wrapper to the APMonitor~\cite{Hedengren2014} environment, running the solvers IPOPT~\cite{IPOPT} and APOPT~\cite{APOPT}.} 
To that end, we discretize the problem by approximating the truncated normal distribution using a finite set of $m$ \emph{quantiles}. Denote $\mathcal{I}_m = \set{0, \ldots, m-1}$ and let $Z \sim \mathcal{N}(0,1)$. Then, $\mathcal A_{p,m} = \set{A_{p,m}(i)\mid i\in\mathcal I_m}$, \mbox{where the quantile $\mathcal A_{p,m}(i)$ satisfies}

\smallskip
\hspace{19.91mm}
{$\qquad\qquad\Pr\brackets{Z \le \mathcal A_{p,m}(i)\ |\ Z\in[-t_p,t_p]} = \frac{i}{m-1}.$}
\smallskip

We find it convenient to denote $S'(i,x)= S(\mathcal A_{p,m}(i),x)$.
Accordingly, the discretized unbiased quantization problem is defined as (we omit the $1/m$ constant from the minimization as it does not affect the solution):
\vspace*{-1mm}

\hspace{5.4mm}
\resizebox{.92\columnwidth}{!}{
$
\begin{array}{l}
\displaystyle{\minimize_{S', R}}\ \ \ \ \displaystyle \sum_{i \in \mathcal{I}_m,x \in \mathcal{X}_b} S'(i,x) \cdot \parentheses{\mathcal A_{p,m}(i)-R(x)}^2\vspace*{-0mm} \qquad 
\text{subject to}\\\medskip
{\small (\textit{\textcolor{gray}{Unbiasedness}})} \displaystyle \sum_{x \in \mathcal{X}_b} S'(i,x) \cdot R(x) = \mathcal A_{p,m}(i)  \quad\forall\, i \in \mathcal{I}_m\\\medskip
{\small (\textit{\textcolor{gray}{Probability}})}\displaystyle\ \ \ \ \sum_{x \in \mathcal{X}_b}S'(i,x)=1\quad \forall\, i \in \mathcal{I}_m 
\,,\qquad S'(i,x)\ge0 \quad\forall\, i \in \mathcal{I}_m,\ x \in \mathcal{X}_b\\
\end{array}
$
}

\vspace*{-1.5mm}
The solution to this optimization problem yields the set of quantization-values $\mathcal Q_{b,p} = \set{R(x) \mid x\in\mathcal X_b}$ we are seeking. A value $z \in [-t_p, t_p]$ (not just the quantiles) is then stochastically quantized to one of the two nearest values in $\mathcal Q_{b,p}$. Such quantization is optimal for a fixed set \mbox{of quantization-values, so we do not need $S$ at this point.}

Unlike in vanilla BSQ (\cref{sec:alg:bsq}), in \alg{}, as implied by the optimization problem, the number of values that fall outside the range $[-t_p, t_p]$ may slightly deviate from $d\cdot p$ (and our guarantees are unaffected by this). This is because we precompute the optimal quantization-values set $\mathcal Q_{b,p}$ for a given $b$ and $p$ and set $t_p$ according to the $\mathcal{N}(0,1)$ distribution. In turn, this allows the \sender{}s to use $\mathcal Q_{b,p}$ when encoding rather than compute $t_p$ and then $\mathcal Q_{b,p}$ for each preprocessed vector separately.
This results in a near-optimal quantization for the actual rotated and scaled coordinates, in the sense that: (1) for large $d$ values, the distribution of the rotated and scaled coordinates converges to that of independent normal random variables; (2) for large $m$ values, 
\mbox
{the discrete problem converges to the continuous one.}
\vspace*{-2.5mm}
\subsection{Putting it all together}\label{sec:alg:alg}
\vspace*{-1.5mm}
 
The pseudo-code of \alg appears in Algorithm~\ref{alg:quickfl_initial_new}.
As mentioned, we use the uniform random rotation as a preprocessing stage done by the \senders. Crucially, similarly to~\citet{pmlr-v70-suresh17a}, and unlike in \citet{vargaftik2021drive,EDEN}, all \senders use the same rotation, which is a key ingredient in achieving fast decoding complexity. 

To compute this rotation (and its inverse by the \receiver), the parties rely on \emph{global shared randomness} as mentioned in \cref{sec:Preliminaries}. In practice, having shared randomness only requires the round's participants and the server to agree on a pseudo-random number generator seed, \mbox{which is standard practice.}

\vspace*{-2mm}
\paragraph{Clients.}
Each \sender $c$ uses global shared randomness to compute its rotated vector $T\parentheses{\overline x_c}$. Importantly, all \senders use the same rotation. 
As discussed, for large dimensions, the distribution of each entry in the rotated vector converges to $\mathcal N(0,\norm{\overline x_c}_2^2/d)$. Thus, $c$ normalizes it by $\sqrt d/\norm {\overline x_c}_2$ so the values of $\overline{Z}_c$ are approximately distributed as $\mathcal N(0,1)$ (line 1). (Note that we do {\em not} assume the values are {\em actually} normally distributed;   
this is {\em not} required for our algorithm or our analysis.)
Next, the client divides the preprocessed vector into large and small values (lines 2-4). 
The small values (i.e., whose absolute value is smaller than $t_p$) are stochastically quantized (i.e., in an unbiased manner) to values in the precomputed set $\mathcal Q_{b,p}$. We implement $\mathcal Q_{b,p}$ as an array where $\mathcal Q_{b,p}[x]$ stands for the $x$'th quantization-value; this allows us to transmit just the quantization-value indices over the network (line 5).
Finally, each client sends to the server the vector's norm $\norm {\overline x_c}_2$, the indices $\overline X_c$  of the quantization-values of $\overline V_c$ (i.e., the small values), and the exact large \mbox{values with their indices in $\overline{Z}_c$ (line 6).}

\vspace*{-2mm}
\paragraph{Server.} For each \sender $c$, the server uses $\overline X_c$ to look up the quantization-values $\widehat{\overline V}_{c}$ of the small coordinates (line 8) and constructs the estimated scaled rotated vector $\widehat{\overline Z}_{c}$ using $\widehat{\overline V}_{c}$ and the accurate information about the large coordinates $\overline U_c$ and their indices $\overline I_c$ (line 9). Then, the server computes the estimate $\widehat{\overline Z}_{\mathit{avg}}$ of the average rotated and scaled vector by averaging the reconstructed clients' scaled and rotated vectors and multiplying the results by the inverse scaling factor $\frac{\norm {\overline x_c}_2}{\sqrt d}$ (line 10). Finally, the server performs \emph{a single} inverse rotation using the global shared randomness {to obtain the estimate of the mean vector $\widehat {\overline x}_{\mathit{avg}}$ (line 11).}

\setlength{\textfloatsep}{2pt}
\begin{algorithm}[t]
\caption{~\alg}
\label{code:alg1}
\begin{algorithmic}
    \vspace{0.5mm} \State \hspace*{-4mm}\resizebox{.99995998\textwidth}{!}{\textbf{Input:} Bit budget $b$, BSQ parameter $p$, and their threshold $t_p$ and precomputed quantization-values $\mathcal Q_{b,p}$.}
    \vspace{-5.52mm}\\\hspace*{-4mm}\hrulefill
\end{algorithmic}    
\vspace{-5.252mm}
\begin{multicols}{2}
\begin{algorithmic}[1]
  \Statex \hspace*{-4mm}\textbf{\Sender{} $c$:}
    \smallskip
    \State $\overline Z_c \leftarrow \frac{\sqrt d}{\norm {\overline x_c}_2}\cdot T\parentheses{\overline x_c}$\textcolor{white}{$\big($}\smallskip
    \State $\overline U_c\leftarrow\set{ \overline{Z}_c[i] \,\big|\, \abs{\overline{Z}_c[i]} > t_p}$\smallskip 
    \State  $\overline I_c\leftarrow \set{ i \,\big|\, \abs{\overline{Z}_c[i]} > t_p}$\smallskip 
    \State $\overline V_c\leftarrow\set{ \overline{Z}_c[i] \,\big|\, \abs{\overline{Z}_c[i]} \le t_p}$\smallskip
    \State $\overline X_c \leftarrow$ Stochastically quantize $\overline V_c$ using $\mathcal Q_{b,p}$ 
    \State Send $\parentheses{\norm {\overline x_c}_2,\,\overline U_c,\,\overline I_c,\,\overline X_c}$ to \receiver 
\end{algorithmic}
\columnbreak

\begin{algorithmic}[1]
\setcounter{ALG@line}{6}
    \Statex \hspace*{-4mm}\textbf{\Receiver:}\smallskip
    \State  For all $c$: \smallskip
    \State  \hspace{4mm} $\widehat{\overline V}_{c} \leftarrow  \set{\mathcal Q_{b,p}[x] \text{ for $x$ in }\overline X_c}$ \smallskip
    \State  \hspace{4mm}  $\widehat{\overline Z}_{c} \leftarrow$ Merge $\widehat{\overline V}_{c}$ and $\parentheses{\overline U_c,\,\overline I_c}$\smallskip
    \State $\widehat{\overline Z}_{\mathit{avg}} \leftarrow \frac{1}{n}\cdot \sum_{c=0}^{n-1} \frac{\norm {\overline x_c}_2}{\sqrt d} \cdot \widehat{\overline Z}_{c}$\smallskip
    \State  $\widehat {\overline x}_{\mathit{avg}}  \leftarrow T^{-1}\parentheses{\widehat{\overline Z}_{\mathit{avg}}}$
\end{algorithmic}
\end{multicols}
\label{alg:quickfl_initial_new}
\vspace*{-1.2mm}
\end{algorithm}

In \cref{app:vnmse_proof}, we formally establish the following error guarantee for \alg{} (i.e., \cref{alg:quickfl_initial_new}). 

\begin{restatable}{theorem}{qflurrnmse}
\label{thm:qfl_urr_nmse}
 Let $Z \sim \mathcal N(0,1)$ and let $\widehat Z$ be its estimation by our distribution-aware unbiased quantization scheme. Then, for any number of clients $n$ and any set of $d$-dimentional input vectors $\set{\overline x_c \in \mathbb{R}^d\mid c\in\set{0,\ldots,n-1}}$, we have that \alg's \nmse respects 
 {\small
 $$\nmse = \frac{1}{n} \cdot \E\Big[\parentheses{Z {-} \widehat{Z}}^2 \Big]+ O\Big(\frac{1}{n} \cdot\sqrt{\frac{\log d}{d}}\Big).$$   
 }
\end{restatable}
\vspace{-0mm}
The theorem accounts for the cost of quantizing the actual rotated and scaled coordinates (which are not independent and follow a shifted-beta distribution) instead of independent and truncated normal variables. The difference manifests in the $O(1/n \cdot\sqrt{\log d/d}) = O(1/n)$ term; this quickly decays with the dimension and number of clients. 

As the theorem suggests, $\nmse \approx \frac{1}{n} \cdot \E[(Z {-} \widehat{Z})^2]$ for \alg{} in settings of interest. 
Moreover,

\hspace{28mm}
\resizebox{.602\columnwidth}{!}{
$
\begin{array}{l}
   \! \! \! \! \E\Big[{\parentheses{Z {-} \widehat{Z}}^2}\Big] = 
    \E\Big[\parentheses{Z {-} \widehat{Z}}^2 \big |\ Z \in [-t_p, t_p]\Big] \cdot \Pr[ Z \in [-t_p, t_p]] \\     + \mbox{    }\E\Big[\parentheses{Z {-} \widehat{Z}}^2 \big |\  Z \not\in [-t_p, t_p]\Big]\cdot \Pr[ Z \not\in [-t_p, t_p]]\ \ ,   
\end{array}
$
}

where the first summand is exactly the quantization error of our distribution-aware unbiased BSQ, and the second summand is $0$ as such values are sent exactly.
This means that for any $b$ and $p$, we can exactly compute $\E[(Z {-} \widehat{Z})^2]$ given the solver's output (i.e., the precomputed quantization \mbox{values). For example, it is $\approx 8.58$ for $b=1$ and $p=2^{-9}$. }
Another important corollary of \cref{thm:qfl_urr_nmse} is that the convergence speed with QUIC-FL matches the vanilla SGD since its estimates are unbiased and with an $O(1/n)$ \nmse (e.g., see Remark 5 in \citet{karimireddy2019error}).

\vspace{-1mm}
\subsection{Optimizations}\label{sec:alg:scary_stuff_no_one_will_understand}
\vspace{-2mm}

We introduce two optimizations for \alg{}: we further reduce \nmse with client-specific shared randomness and then accelerate the processing time via the randomized Hadamard transform.
\vspace{-1mm}

\paragraph{\alg{} with client-specific shared randomness.}

Past works (e.g.,~\citet{ben2020send,chen2020breaking,roberts1962picture}) on optimizing the quantization-bandwidth tradeoff show the benefit of using \emph{shared randomness} to reduce the quantization error. Here, we show how to leverage this (client-specific) \mbox{shared randomness to design near-optimal quantization of the rotated and scaled vector. }

To that end, in Appendix~\ref{app:hell}, we first extend our optimization problem to allow client-specific shared randomness and then derive the related discretized problem. Importantly, we also discretize the client-specific shared randomness where each client, for each rotated and quantized coordinate, uses a shared random $\ell$-bit value $H\sim \mathcal U[\mathcal H_l]$  where $\mathcal H_\ell = \set{0, \dots, 2^\ell -1}$. 

The resulting optimization problem is given as follows (additions are highlighted in red):

\hspace{0mm}
\resizebox{0.99\columnwidth}{!}{
$
\begin{array}{l}
\displaystyle{\minimize_{S',R}}  \displaystyle \sum_{\substack{\textcolor{red}{h\in\mathcal H_\ell}\\i\in\mathcal I_m\\x\in \mathcal X_b}} S'(\textcolor{red}{h,}\ i,x) \cdot \parentheses{\mathcal A_{p,m}(i)-R(\textcolor{red}{h,}\ x)}^2\medskip \qquad
\text{subject to}\\
{\small (\textit{\textcolor{gray}{Unbiasedness}})~} \displaystyle\textcolor{red}{\frac{1}{2^{\ell}}\ \cdot}\sum_{\substack{\textcolor{red}{h\in\mathcal H_\ell}\\x \in \mathcal X_b}} S'(\textcolor{red}{h,}\ i,x) \cdot R(\textcolor{red}{h,}\ x) = \mathcal A_{p,m}(i) \ \ \hfill \forall\, i \in \mathcal I_m\medskip\medskip\\\medskip\medskip
{\small (\textit{\textcolor{gray}{Probability}})}\,\,\,\,\,\displaystyle \sum_{x\in \mathcal X_b}S'(\textcolor{red}{h,}\ i,x)=1
\hfill\forall\,\textcolor{red}{h \in \mathcal H_\ell,}\,\, i \in \mathcal I_m
\, , \quad S'(\textcolor{red}{h,}\ i,x)\ge0
\quad \forall\, \textcolor{red}{h \in \mathcal H_\ell,}\,\,i \in \mathcal I_m, \,\, x \in \mathcal X_b
\end{array}
$
}\vspace*{-2mm}
Here $S'(h,i,x) = S(h,\mathcal A_{p,m}(i),x)$ represents the probability that the sender sends the message $x\in\mathcal X_b$ given the shared randomness value $h$ for the input value $ \mathcal A_{p,m}(i)$.   
 Similarly, $R(h,x)$ is the value the receiver associates with the message $x$ when the shared randomness is $h$. 
 We explain how to use $R(h,x)$ to determine the appropriate message for the sender on a general input $z$, along with further details, in Appendix~\ref{app:hell}. We note that~\cref{thm:qfl_urr_nmse} trivially applies to \alg with client-specific shared randomness as this only lowers the quantization's expected squared error, i.e., $\E[({Z - \widehat{Z})}^2 ]$, and thus the resulting \nmse.

Here, we provide an example based on the solver's solution for the case of using a single shared random bit (i.e., $H\sim \mathcal U[\mathcal H_1]$), a single-bit message ($b=1$), and $p=2^{-9}$ ($t_p\approx 3.097$);
We can then use the following algorithm, where $X$ is the sent message and $\alpha=0.7975, \beta=5.397$ are constants:

\vspace*{-4mm}
{\small
\begin{equation*}
X = 
\begin{cases}
1 & \mbox{if $H = 0$ and $Z\ge0$}\\
0 & \mbox{if $H = 1$ and $Z<0$}\\
\mathit{Bernoulli}(\frac{2Z}{\alpha+\beta})& \mbox{If $H=1$ and $Z\ge0$}\\
1-\mathit{Bernoulli}(\frac{-2Z}{\alpha+\beta})& \mbox{If $H=0$ and $Z<0$}\\
\end{cases}
%
%
\qquad\qquad\qquad
\widehat Z = 
\begin{cases}
-\beta & \mbox{if $H=X=0$}\\
-\alpha & \mbox{if $H = 1$ and $X=0$}\\
\alpha  & \mbox{If $H=0$ and $X=1$}\\
\beta  & \mbox{If $H=X=1$}\\
\end{cases}\ .
\end{equation*}
}
{For example, consider $Z=1$, and recall that $H=0$ w.p. $1/2$ and $H=1$ otherwise. Then:}
\begin{itemize}[align=left, leftmargin=0mm, labelindent=0\parindent, listparindent=0\parindent, labelwidth=0mm,itemindent=!,itemsep=1pt,parsep=0pt,topsep=1pt]
    \item If $H=0$, we have $X=1$ and thus $\widehat Z = \alpha$.
    \item If $H=1$, then $X=1$ w.p. $\frac{2}{\alpha+\beta}$ and we get $\widehat Z = \beta$. Otherwise (if $X=0$), we get $\widehat Z = -\alpha$.
\end{itemize} 
Indeed, we have that the estimate is unbiased since:
{\small
\begin{equation*}
\resizebox{.716\hsize}{!}{%
$
\mathbb E[\widehat Z \mid Z=1] = \frac{1}{2}\cdot \alpha + \frac{1}{2}\cdot\parentheses{ \frac{2}{\alpha+\beta}\cdot \beta + \frac{\alpha+\beta-2}{\alpha+\beta}\cdot (-\alpha)}
= 1.
$
}
\end{equation*}
}

\vspace{-2mm}
\mbox{We next calculate the expected squared error (by symmetry, we integrate over positive $z$):}
%
{\small
\begin{equation*}
\resizebox{1.01\hsize}{!}{%
$
\mathbb E\brackets{(Z-\widehat Z)^2} = \sqrt{\frac{2}{\pi}}\bigg(\int_0^{t_p} \frac{1}{2}\cdot\Big((z-\alpha)^2 +\quad \frac{2z}{\alpha+\beta}\cdot (z-\beta)^2 + \frac{\alpha+\beta-2z}{\alpha+\beta}\cdot (z+\alpha)^2\Big)\cdot e^{-z^2/2}dz\bigg)\approx 3.29.
$
}
\end{equation*}
}

\vspace{-2mm}
Observe that it is significantly lower than the 8.58 quantization error obtained without shared randomness. \mbox{As we illustrate (\cref{fig:sensitivity}), the error further decreases when using more shared random bits.}

\vspace*{-2mm}
\paragraph{Accelerating \alg{} with RHT.}

Similarly to previous algorithms that use random rotations as a preprocessing state (e.g.,~\citet{pmlr-v70-suresh17a,vargaftik2021drive,EDEN}) we propose to use the Randomized Hadamard Transform (RHT) \cite{ailon2009fast} instead of uniform random rotations. Although RHT does not induce a uniform distribution on the sphere, it is considerably more efficient to compute, and, under mild assumptions, the resulting distribution is close to that of a uniform random rotation~\cite{vargaftik2021drive}. Nevertheless, we are interested in establishing how using RHT instead of a uniform random rotation affects the formal guarantees of \alg{}. 
 
As shown in Appendix~\ref{app:hadamard}, \alg with RHT has the same asymptotic guarantee as with random rotations, albeit with a larger constant (constant factor increases in the fraction of exactly sent values and \nmse). We note that these guarantees are still stronger than those of DRIVE~\cite{vargaftik2021drive} and EDEN~\cite{EDEN}, which only prove RHT bounds for vectors whose coordinates are sampled \mbox{i.i.d. from a distribution with finite moments, and are not applicable to adversarial vectors.}

For example, when $p=2^{-9}$ and we use $\ell=4$ shared random bits per quantized coordinate, our analysis shows that the \nmse for $b=1,2,3,4$ is bounded by $4.831/n, 0.692/n,\allowbreak 0.131/n, 0.0272/n$, accordingly, and that the expected number of coordinates outside $[-t_p,t_p]$ is bounded by $3.2\cdot p\cdot d \approx 0.006\cdot d$. 
We note that this result does not have the $O\big(1/n \cdot\sqrt{\log d/d}\big)$ additive \nmse term. The reason is that we directly analyze the error for the Hadamard-rotated coordinates (whereas Theorem~\ref{thm:qfl_urr_nmse} relies on analyzing the error in quantizing normal variables and factoring in the difference in distributions).
In particular, we get that for $p=2^{-9}, b\in\set{1,2,3}$, running \alg with Hadamard and $(b+1+2.2\cdot p)\approx b+1.0043$ bits per coordinate has lower \nmse than $b$-bits \alg with uniform random rotation. That is, one can compensate for the increased error caused by using RHT by adding one bit per coordinate. In practice, as shown in the evaluation, the actual performance is (as one might expect) actually close to the theoretical results for uniform random rotations; 
\mbox{improving the bounds is left as future work.}

\begin{figure*}[t]
\centering
\includegraphics[width=\linewidth]{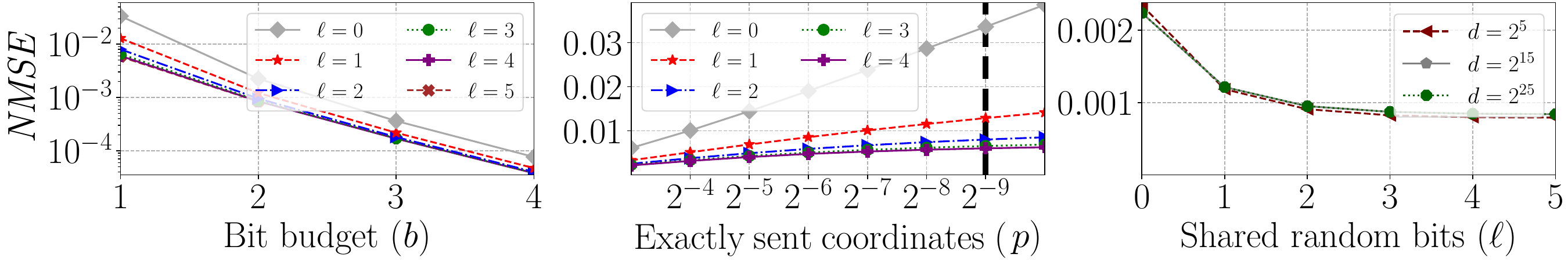}
\vspace*{-7mm}
\caption{The \nmse of \alg (with $n=256$ clients) as a function of the bit budget $b$, fraction $p$, and shared random bits $\ell$. In the leftmost figure, $p=2^{-9}$, while the other two use $b=4$.\ran{what is $p$ in the leftmost figure and $b$ in the other two}}\label{fig:sensitivity}
\vspace*{-4mm}
\end{figure*}

\begin{figure*}[]

\centering
\hspace*{-6mm}\includegraphics[width=0.195\linewidth]{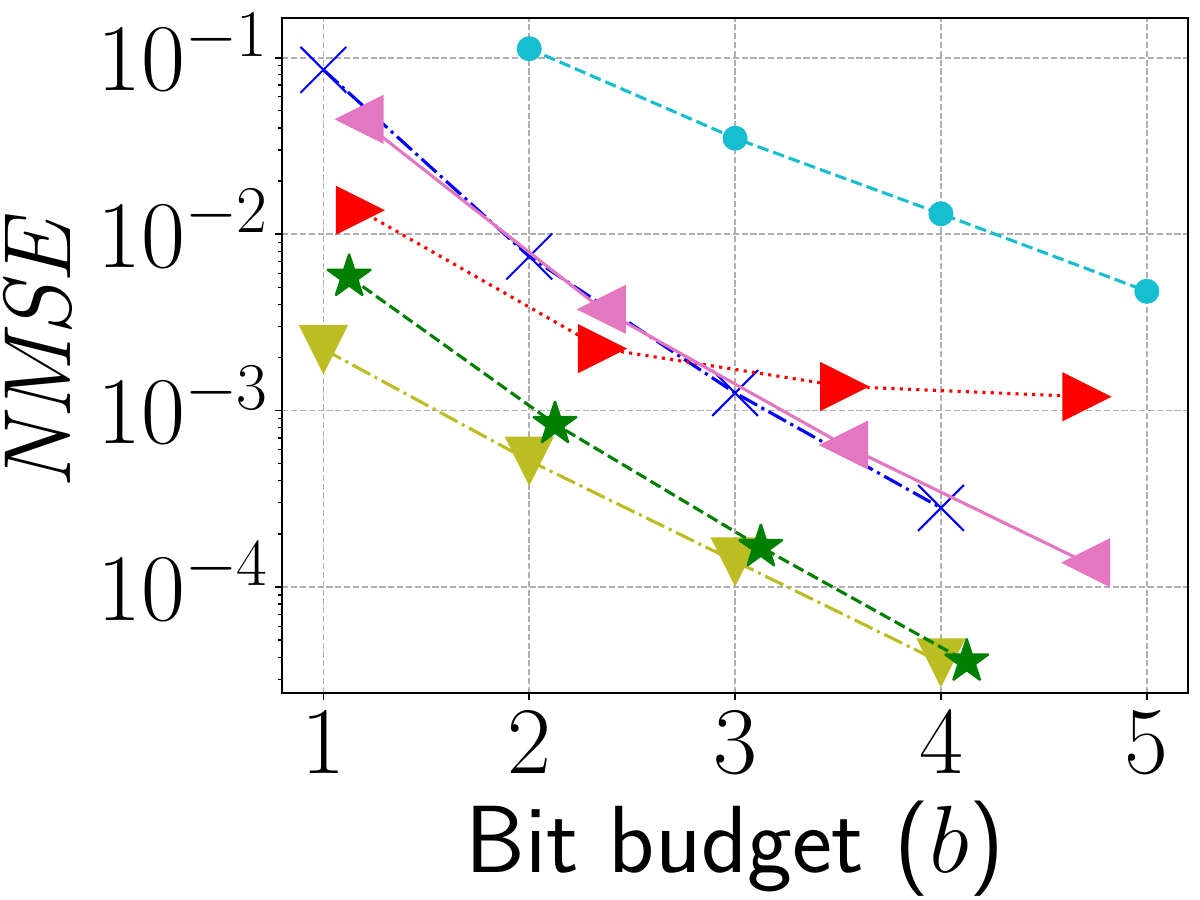}
\hspace*{-0.5mm}\includegraphics[width=0.195\linewidth]{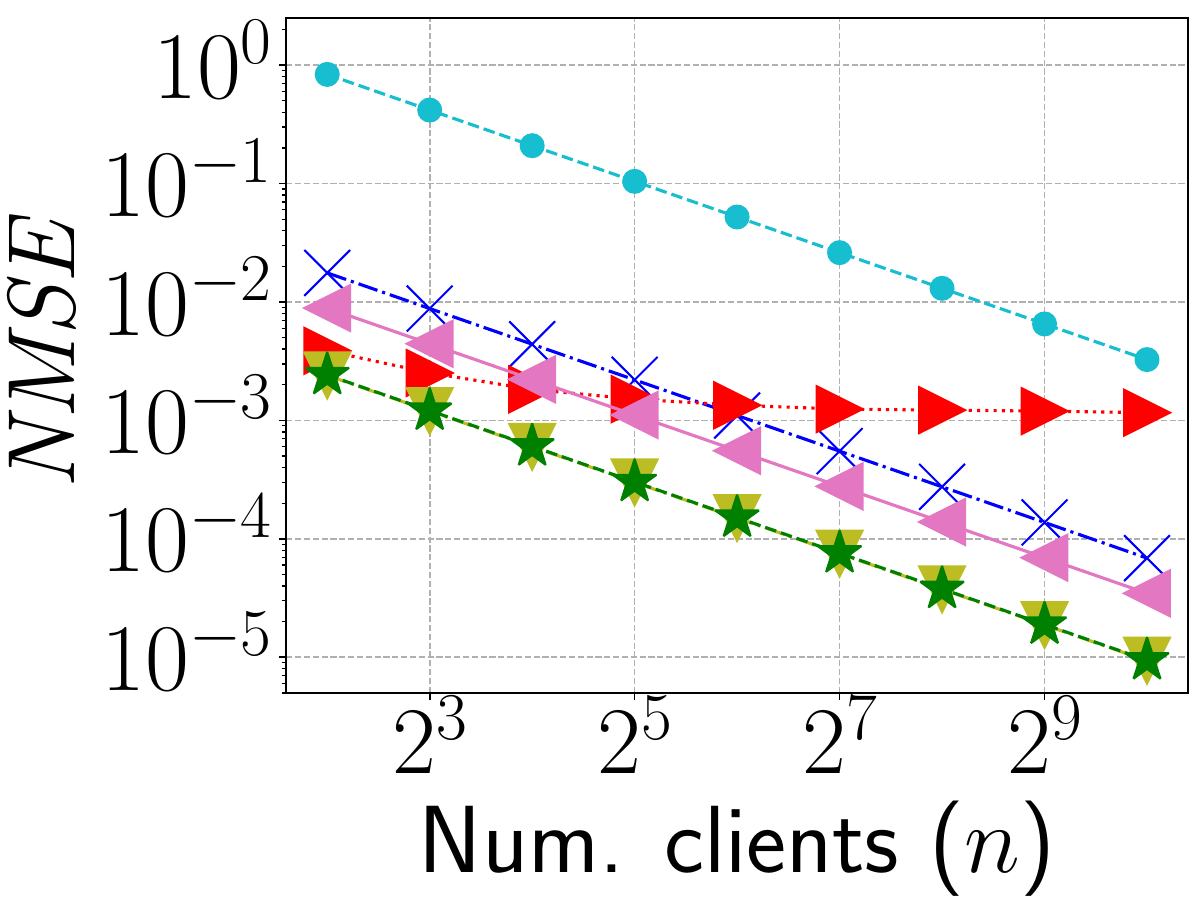}
\hspace*{-0.5mm}\includegraphics[width=0.195\linewidth]{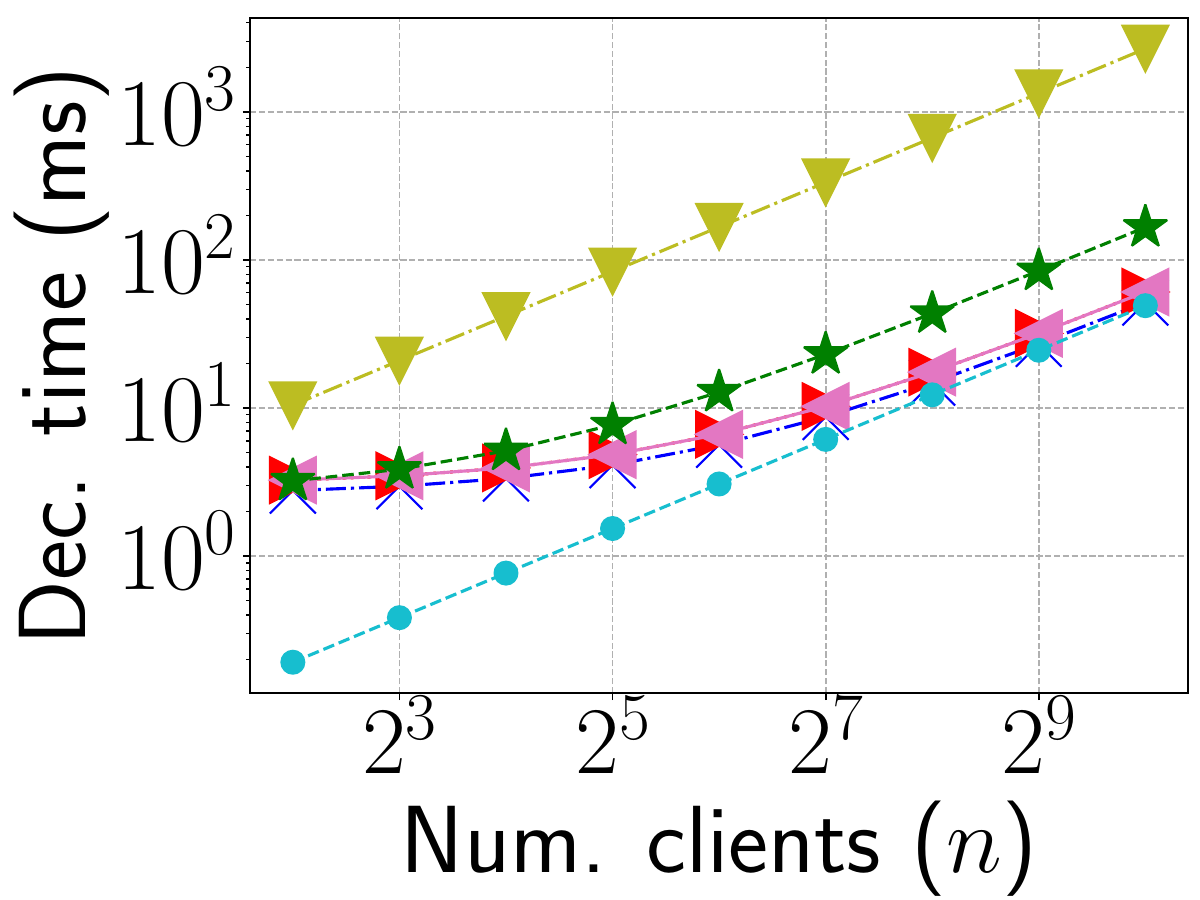}
\hspace*{-0.5mm}\includegraphics[width=0.195\linewidth]{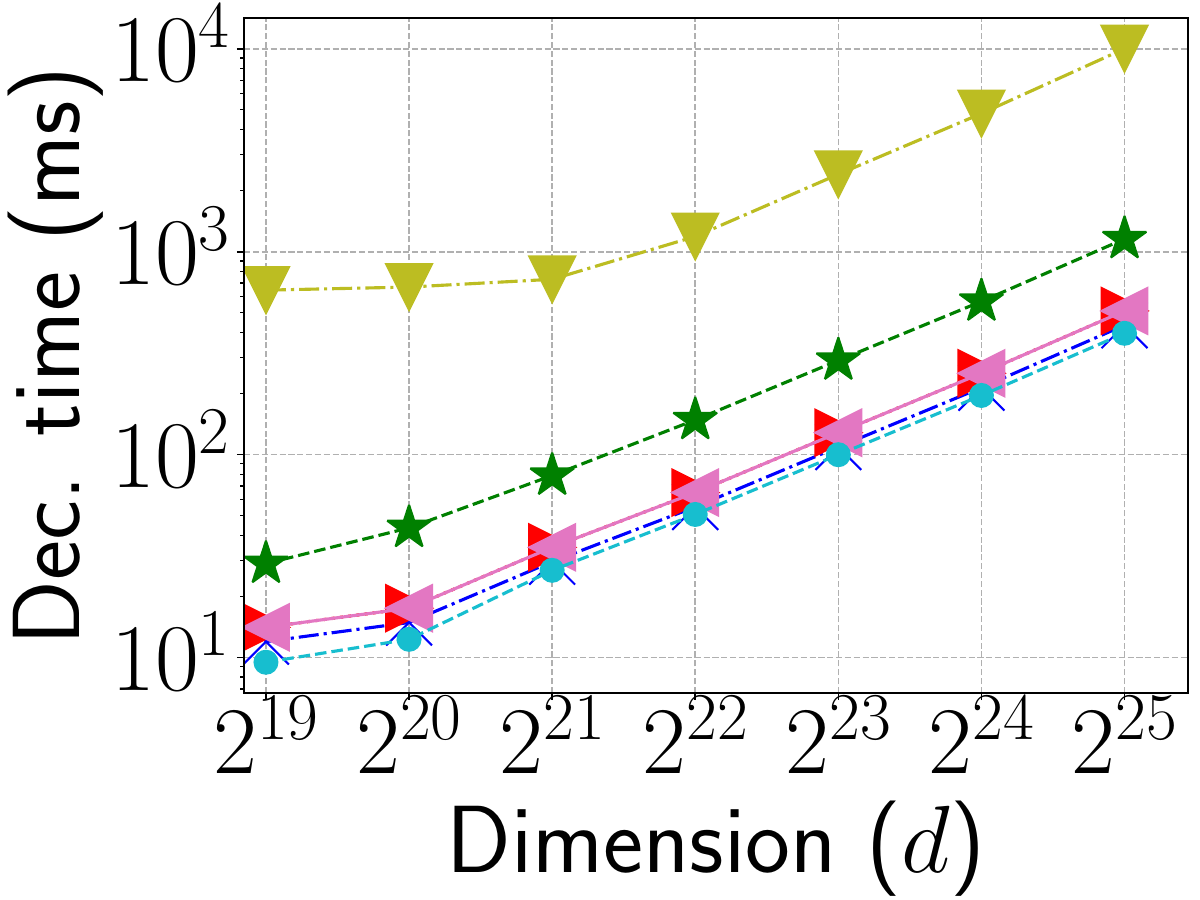}
\hspace*{-0.5mm}\includegraphics[width=0.195\linewidth]{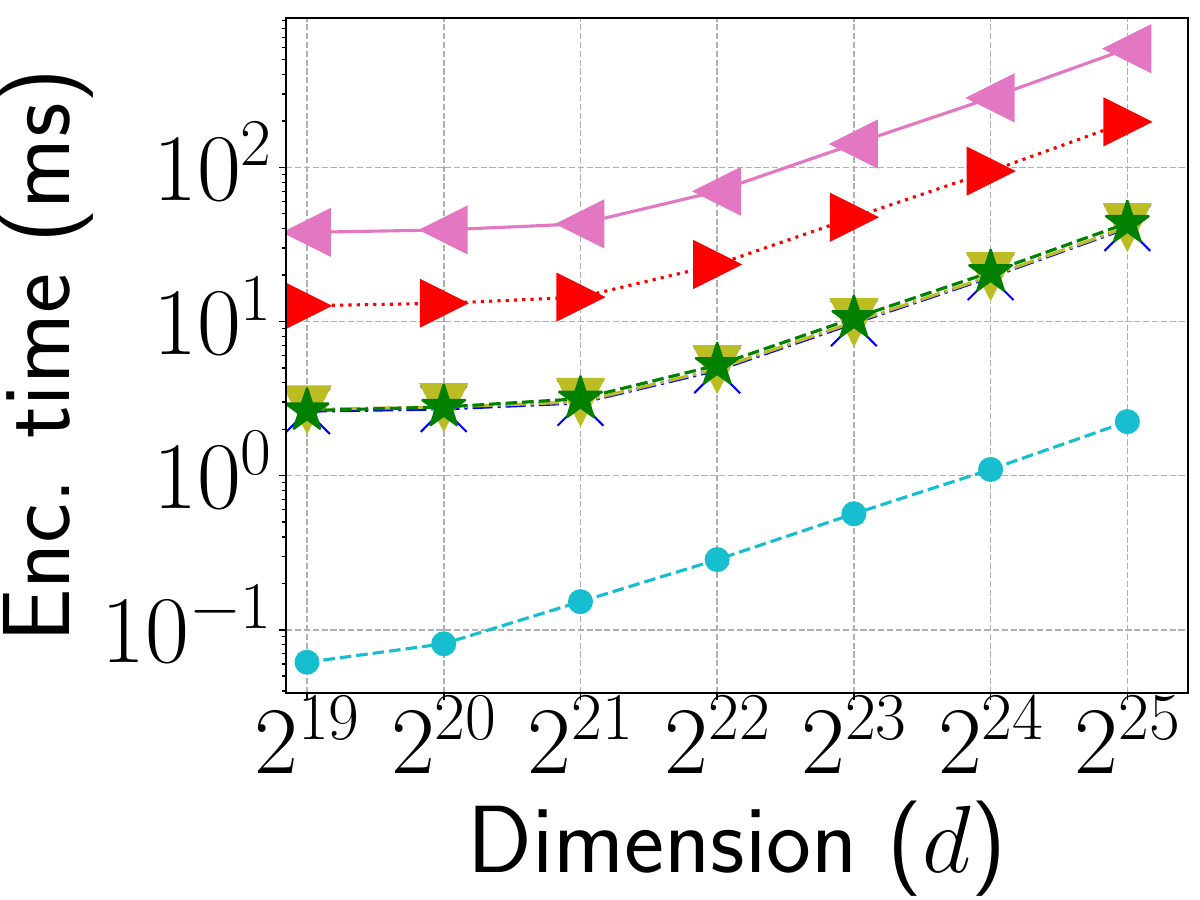}
\includegraphics[width=0.795\linewidth]{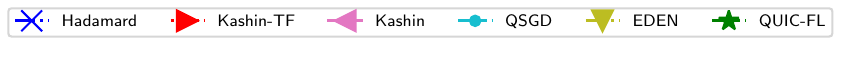}\\
\vspace*{-4mm}
\caption{Comparison to alternatives with $n$ clients that have the same $LogNormal(0,1)$ input vector. The default values are $n=256$ clients, $b=4$ bit budget, and $d=2^{20}$ dimensions. 
}\label{fig:NMSE}
\vspace*{2mm}
\end{figure*}

\looseness=-1
Finally, \cref{tbl:asymptotics} summarizes the theoretical guarantees of \alg{} in comparison to state-of-the-art DME techniques. The encoding complexity of \alg{} is dominated by RHT and is done in $O(d\cdot\log d)$ time. The decoding of \alg{} only requires the addition of all estimated rotated clients' vectors and \emph{a single} inverse RHT transform resulting in $O(n \cdot d + d \cdot \log d)$ time. {As mentioned, the \nmse with RHT remains $O(1/n)$. }
Observe that \alg{} has an \emph{asymptotic} speed improvement either at the clients or the server among the techniques that achieve $O(1/n)$ \nmse.


{
\vspace*{-3mm}
\section{Evaluation}\label{sec:expr:next-word}
\vspace*{-2mm}
\label{sec:evaluation}
In this section, we evaluate the fully-fledged version of \alg that leverages RHT and client-specific shared randomness, as given in~\cref{app:hell,alg:final}.

\T{Parameter selection.}
We experiment with how the different parameters (number of quantiles $m$, the fraction of coordinates sent exactly $p$, the number of shared random bits $\ell$, etc.) affect the performance of our algorithm. As shown in Figure~\ref{fig:sensitivity}, introducing shared randomness significantly decreases the \nmse compared with \cref{alg:quickfl_initial_new} (i.e., $\ell=0$). We note that these results are essentially independent of the input data (because of the RHT).
Additionally, the benefit from adding each additional shared random bit diminishes, and the gain beyond $\ell=4$ is negligible, especially for large $b$. Accordingly, we hereafter use $\ell=6$ for $b=1$, $\ell=5$ for $b=2$, and $\ell=4$ for $b\in\set{3,4}$. With respect to $p$, we determined $1/512$ as a good balance between the \nmse and bandwidth overhead for accurately sent values and their indices.
}

\looseness=-1
\T{Comparison to state-of-the-art DME techniques.}
Next, we compare the performance of \alg to the baseline algorithms in terms of \nmse, encoding speed, and decoding speed, using an NVIDIA 3080 RTX GPU machine with 32GB RAM and i7-10700K CPU @ 3.80GHz.
Specifically, we compare with inputs where each
coordinate is independently $LogNormal(0,1)$~\cite{chmiel2020neural}.
Hadamard~\cite{pmlr-v70-suresh17a}, Kashin's representation~\cite{caldas2018expanding,safaryan2020uncertainty}, QSGD~\cite{NIPS2017_6c340f25}, and EDEN~\cite{EDEN}.
We evaluate two variants of Kashin's representation: (1) The TensorFlow (TF) implementation~\cite{tensorflowfedkashincode} that, by default, limits the decomposition to three iterations, and (2) the theoretical algorithm that requires $O(\log (n\cdot d))$ iterations.
For this experiment, the coordinates are 
As shown in Figure~\ref{fig:NMSE}, \alg has significantly faster decoding than EDEN (as previously conveyed in \cref{fig:intro_example}), the only alternative with competitive \nmse. 

\alg is also significantly more accurate than all other approaches. We observe that the default TF configuration of Kashin's representation suffers from a bias, and therefore its \nmse is not $O(1/n)$. In contrast, the theoretical algorithm is unbiased but has an asymptotically slower encoding time. We observed similar trends for different $n,b$, and $d$ values. We consider the algorithms' bandwidth over all coordinates (i.e., with $b+\frac{64}{512}$ bits for \alg, namely a float and a 32-bit index for each accurately sent entry).
Overall, the empirical measurements fall in line with the bounds in Table~\ref{tbl:asymptotics}.

\begin{figure*}[h!]
\centering
\includegraphics[clip,width=0.95\linewidth]{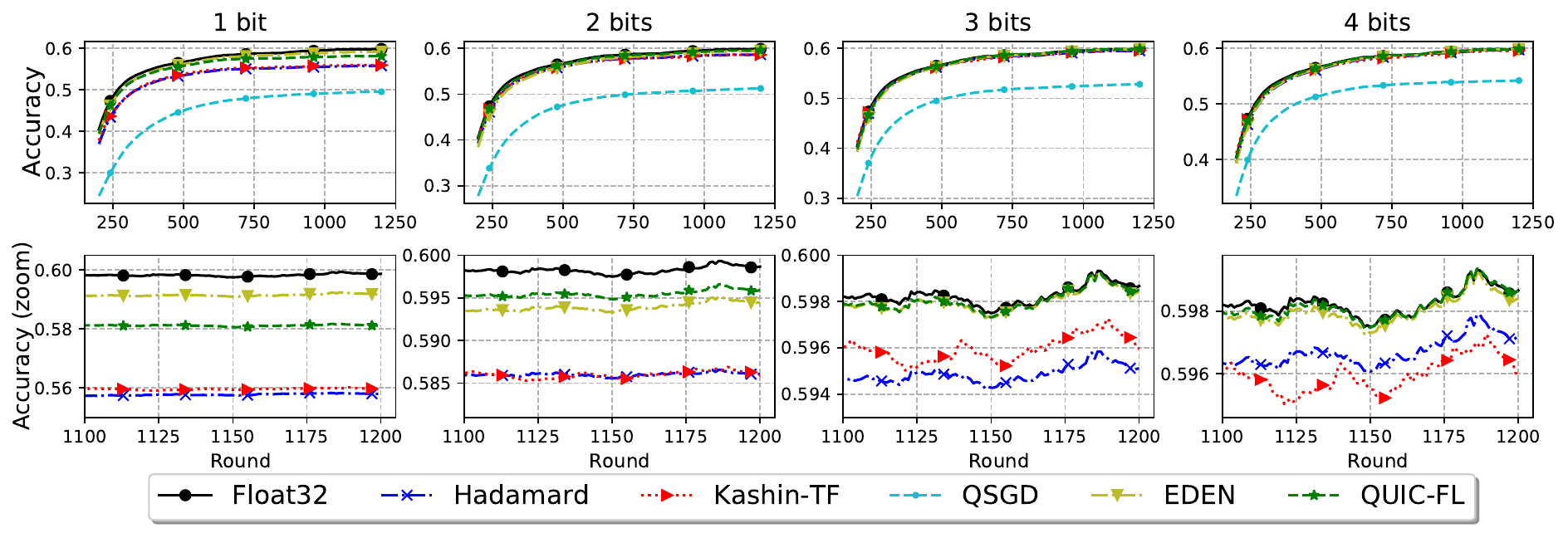}
\vspace*{-1mm}
\caption{\emph{FedAvg} over the Shakespeare next-word prediction task at various bit budgets (rows). We report training accuracy per round with a rolling mean of 200 rounds. 
}
\label{fig:shakespeare}
\end{figure*}


\T{Federated Learning Experiments.} 
\looseness=-1
We evaluate \alg over the Shakespeare next-word prediction task \cite{shakespeare, mcmahan2017communication} using an LSTM recurrent model. It was first suggested in \cite{mcmahan2017communication} to naturally simulate a realistic heterogeneous federated learning setting. We run \emph{FedAvg}~\cite{mcmahan2017communication} with the Adam server optimizer~\cite{KingmaB14} and sample $n=10$ clients per round. We use the setup from the federated learning benchmark of \cite{reddi2021adaptive}, restated for convenience in Appendix~\ref{app:expr-details}.
Figure~\ref{fig:shakespeare} shows how \alg 
is competitive with the asymptotically slower EDEN and markedly more accurate than other alternatives.


Due to space limits, experiments for image classification (\cref{app:imagecfl}), a framework that uses DME \mbox{as a building block (\cref{app:ef21_topk_qfl}), and power iteration (\cref{app:subsec:pi}), appear in the appendix.}




\vspace*{-2mm}
\section{Related Works}

In \cref{sec:intro}, we gave an overview of most related compression (DME) methods. Here we focus on specific sub-topics. We further review other techniques in \cref{app:extended_RW}.

\T{Bounded support quantization.} Previous works on compression in federated learning observed considered bounding the range of the updates. They suggest ad-hoc mitigations, such as clipping \cite{Zhang2020Why, wen2017terngrad, pmlr-v162-zhang22b, NEURIPS2021_ab9ebd57}, preconditioning \cite{pmlr-v70-suresh17a, caldas2018expanding}, and bucketing \cite{NIPS2017_6c340f25}. On the other hand, methods such as Top-$k$ \cite{NEURIPS2018_b440509a, NEURIPS2020_a851bd0d} demonstrate that considering the largest coordinates is advantageous. \citet{horvath2021a} provides convergence guarantees from combining biased and unbiased compressed estimators.  
BSQ similarly tries to benefit by sending the largest transformed coordinates exactly while sending the rest via unbiased compression.

\T{Distribution-aware quantization.}
Quantization over a distribution, and over a Gaussian source in particular, has been studied for almost a century (for a comprehensive overview, we refer to \citet{720541}). Nevertheless, to our knowledge, such research has not focused on the unbiasedness constraint. The only comparable methods that we are aware of are based on stochastic quantization and introduce an error that increases with the vector's dimension. There are additional unbiased methods that use shared randomness  (e.g., \citet{1057702, ben2020send}), but again, we are unaware of any work that directly optimizes quantization for a distribution with an unbiasedness constraint.
As previously mentioned, perhaps the closest to our approach is the  Lloyd-Max Scalar Quantizer~\cite{lloyd1982least,max1960quantizing}, which optimizes the mean squared error without unbiasedness constraints. Interestingly, there are many generalizations to Lloyd-Max, such as vector quantization \cite{1094577} methods and lattice quantization \cite{1056067}. In future work, we plan to investigate these approaches and extend our distribution-aware \mbox{unbiasedness quantization framework accordingly.}

\AP{higher dimension a-la VQ + non-uniform discretiztion methods}

\bibliography{example_paper}
\bibliographystyle{icml2023}

\newpage
\appendix
\onecolumn

\section{Extended Related Work}\label{app:extended_RW}

This paper focused on the Distributed Mean Estimation (DME) problem where \senders send lossily compressed vectors to a centralized \receiver for averaging.
While this problem is worthy of study on its own merits, we are particularly interested in applications to federated learning, where there are many variations and practical considerations, which have led to many alternative compression \mbox{methods to be considered. }

We note that in essence, \alg is a compression scheme. However, unlike previous DME approaches such as \cite{pmlr-v70-suresh17a,vargaftik2021drive,EDEN}, it brings benefits \emph{only} in a distributed setting with multiple clients, distinguishing it from standard vector quantization methods.

\paragraph{Frameworks that use DME as a building block.}
In addition to EF21~\cite{richtarik2021ef21} and MARINA~\cite{pmlr-v139-gorbunov21a,szlendak2022permutation} which are discussed in detail below, there are additional frameworks that leverage DME as a building block. For example, EF-BV~\cite{condat2022ef}, Qsparse-local-SGD~\cite{basu2019qsparse}, 3PC~\cite{richtarik20223pc}, 
CompressedScaffnew~\cite{condat2022provably},
MURANA~\cite{condat2022murana}, and DIANA~\cite{horvath2023stochastic} accelerate the convergence of non-convex learning tasks via variance reduction, control variates, and compression. These approaches are orthogonal and \mbox{can benefit from better DME techniques such as \alg.}

\paragraph{Entropy encoding.}
When the encoding and decoding time is less important, some previous  approaches have suggested using an entropy encoding such as Huffman or arithmetic encoding to improve the accuracy (e.g.,~\citet{NIPS2017_6c340f25,pmlr-v70-suresh17a,EDEN,dorfman2023}). 
Intuitively, such encodings allow us to losslessly compress the lossily compressed vector to reduce its representation size, thereby allowing less aggressive quantization.
However, we are unaware of available GPU-friendly entropy encoding implementation and thus such methods incur a significant time overhead.

\paragraph{Client-side memory.}
Critically, for the basic DME problem, the assumption is that this is a one-shot process where the goal is to optimize the accuracy without relying on client-side memory. This model naturally fits cross-device federated learning, where different clients are sampled in each round.
We focused on unbiased compression, which is standard in prior works~\cite{pmlr-v70-suresh17a,konevcny2018randomized,vargaftik2021drive,davies2021new,mitchell2022optimizing}.
However, if the compression error is low enough, and under some assumptions, SGD can be proven to \mbox{converge even with biased compression~\cite{abs-2002-12410}.}

\paragraph{Error feedback.}
In other settings, such as distributed learning or cross-silo federated learning, we may assume that \senders are persistent and have a memory that keeps state between rounds.
A prominent option to leverage such a state is to use Error Feedback (EF).
In EF, \senders can track the compression error and add it to the vector computed in the consecutive round. This scheme is often shown to recover the model's convergence rate and resulting accuracy~\cite{seide20141,alistarh2018convergence,richtarik2021ef21,karimireddy2019error} and enables biased {compressors such as Top-$k$~\cite{NEURIPS2018_b440509a} and SignSGD~\cite{bernstein2018signsgd}.}
We compare with the state of the art technique, EF21~\cite{richtarik2021ef21}, in addition to showing how it can be used in conjunction with \alg to facilitate further improvement in Appendix~\ref{app:eval}.

\paragraph{Gradient differences.}
An orthogonal proposal that works with persistent clients, which is also applicable with \alg, is to encode the difference between the current vector and the previous one instead of directly compressing the vector~\cite{mishchenko2019distributed, pmlr-v139-gorbunov21a}. Broadly speaking, this allows a compression error proportional to the L2 norm of the difference and not the vector and can decrease the error if consecutive vectors are similar to each other.


\paragraph{In-network aggregation.}
When running distributed learning in cluster settings, recent works show how in-network aggregation can accelerate the learning process~\cite{sapio2021scaling,lao2021atp,segal2021soar,li2023thc}.
IntSGD~\cite{mishchenko2022intsgd} is another compression scheme that allows one to aggregate the compressed integer vectors in the network. However, their solution may require sending $14$ bits per coordinate while we consider $1-5$ bits per coordinate in QUIC-FL. %
Intuitively, switches are designed to move data at high speeds, and recent advances in switch programmability enable them to easily perform simple {aggregation operations like summation while processing the data.} \mbox{Extending \alg to allow efficient in-network aggregation is left as future work.}

\paragraph{Sparsification.}
Another line of work focuses on sparsifying the vectors before compressing them~\cite{konecy2017federated,aji2017sparse,konevcny2018randomized,wangni2018gradient,stich2018sparsified,fei2021efficient,EDEN}. Intuitively, in some learning settings, many of the coordinates are small, and we can improve the accuracy to bandwidth tradeoff by removing all small coordinates prior to compression. Another form of sparsification is random sampling, which allows us to avoid sending the coordinate indices~\cite{konecy2017federated,EDEN}. We note that combining such approaches with \alg is straightforward, as we can use \alg to compress just the non-zero entries of the sparsified vectors.

\paragraph{Deep gradient compression.}
By combining techniques like warm-up training, vector clipping, momentum factor masking, momentum correction, and deep vector compression, \cite{LinHM0D18} {reports savings of two orders of magnitude in the bandwidth required for distributed learning. }

\paragraph{Shared randomness.}
As shown in~\cite{ben2020send}, shared randomness can reduce the worst-case error of quantizing a single $[0,1]$ value both in biased and unbiased settings. However, applying this approach directly to the vector's entries results in $O(d/n)$ \nmse for any $b=O(1)$.
Another promising orthogonal approach is to leverage shared randomness to push the clients' compression to yield errors in opposite directions, thus making them cancel out and lowering the overall NMSE~\cite{suresh2022correlated,szlendak2022permutation}.

\ifarxiv
\paragraph{Network applications.}
Compression is also fundamental in network telemetry~\cite{basat2020faster,basat2021salsa}, where it allows devices to communicate fewer bits while ensuring an accurate network-wide view at the controller~\cite{ben2020pint,basat2022memento}.

\paragraph{Network efficiency.}
Another, often orthogonal, approach to accelerate DNN training and improve its efficiency is to use emerging high-bandwidth and energy-efficient technologies such as optical switching \cite{khani2021sip} with matching low-latency scheduling techniques (e.g., \cite{goel,huang2016sunflow,solstice,vargaftik2016composite,livshits2018lumos,eclipse,vargaftik2020c}).
\else
\fi

\paragraph{Non-uniform quantization.}
The \alg algorithm, based on the output of the solver (see \S\ref{sec:alg}), uses \emph{non-uniform} quantization, i.e., has quantization levels that are not uniformly spaced. Indeed, recent works observed that non-uniform quantization improves the estimation accuracy and accelerates the learning convergence~\cite{ramezani2019nuqsgd,EDEN}.

\paragraph{Correlations.}
Some techniques further reduce the error by leveraging potential correlations between coordinates~\cite{mitchell2022optimizing} or client vectors~\cite{davies2021new}; it is unclear how to combine these with \alg \mbox{and we leave this for future work.}

\paragraph{Privacy concerns} Several works optimize the communication-accuracy tradeoff while also considering the privacy of clients' data. For example the authors of~\cite{chen2020breaking} optimize the triple communication-accuracy-privacy tradeoff, while~\cite{gandikota2021vqsgd}
addresses the harder problem of compressing the gradients while maintaining differential privacy. Their results can be split into two groups: (1) algorithms that require $O(\log d)$ bits per coordinate to reach the $O(1/n)$ NMSE, and (2) an algorithm that needs $O_\epsilon(1)$ bits per coordinate (which hides functions of $\epsilon$) to reach an NMSE of $\frac{1}{n\cdot(1-\epsilon)}$. In particular, the vNMSE of this approach is always larger than that of \alg, even for $b=1$.

\paragraph{Spherical compression.}
Spherical compression (SC)~\cite{albasyoni2020optimal} is a highly accurate biased quantization method that draws random points on a unit sphere until one is $\epsilon$-close to the vector's direction; it then sends just the number of points needed and the server uses the same pseudo-random number generator seed to compute the estimate.
The algorithm runs in time $O(d / \mathfrak p)$, where $\mathfrak p$ is the probability that a sampled point is $\epsilon$-close to the input and satisfies $\mathfrak p=\tfrac12\,F_{(d-1)/2,\,1/2}(\alpha)$, where $F$ is the CDF of the Beta distribution and $\alpha$ is desired the \vnmse bound.
Evaluating this expression shows that is excessively large when $d$ is not very small. For example, for $d=100$, they would require over $10^{33}$  samples on average (while we consider $d$ in the millions). 
More generally, $1/\mathfrak p \ge (1/\alpha)^{d/2}$, thus the encoding and decoding complexities are exponential. This is implied by the lower bound of~\cite{safaryan2020uncertainty}. Finally, we note that QUIC-FL is unbiased while the SC algorithm is biased (and thus, its \nmse does not decrease linearly in $n$).



\paragraph{Sparse dithering.}
Sparse dithering is a compression method that is shown to be near-optimal in the sense that it requires at most constant factor more bandwidth than the lower bound for the same error rate. We compare with it in Appendix~\ref{app:sd}.

\paragraph{Natural compression}
Natural Compression and Dithering~\cite{horvoth2022natural} are schemes optimized for processing speed by taking into consideration the representation of floating point values when designing the compression. However, In order to get constant \vnmse, they seem to require $O(d\log d)$ bits  compared with $O(d)$ bits in QUIC-FL and their \vnmse is lower bounded by $1/8$, while QUIC-FL achieves a \vnmse of $\approx0.0444,\approx0.00982$ with $3$ and $4$ bits per coordinate.

We refer the reader to~\cite{konecy2017federated,kairouz2019advances,Xu2020CompressedCF,wang2021field} for an extensive review of the current state of the art and challenges.

\section{Analysis of the Bounded Support Quantization technique}\label{app:BSQ}
In this appendix, we analyze the Bounded Support Quantization (BSQ) approach that sends all coordinates {outside a range $[-t_p,t_p]$ exactly and performs a standard (i.e., uniform) stochastic quantization for the rest.}

Let $p\in(0,1)$ and denote $t_p=\frac{\norm{\overline x}_2}{\sqrt{d\cdot p}}$; notice that there can be at most $d\cdot p$ coordinates outside $[-t_p,t_p]$.
Using $b$ bits, we split this range into $2^b-1$ intervals of size $\frac{2t_p}{2^b-1}$, meaning that each coordinate's expected squared error is at most $\parentheses{\frac{2t_p}{2^b-1}}^2/4.$
The MSE of the algorithm is therefore bounded by 
\begin{equation*}
\E\brackets{\norm{\overline x - \widehat{\overline{x}}}_2^2} = 
d\cdot \parentheses{\frac{2t_p}{2^b-1}}^2/4 =  \frac{\norm{\overline x}_2^2}{p\cdot\parentheses{ 2^b-1}^2}.
\end{equation*}

This gives the result
\begin{equation*}
\vnmse \le \frac{1}{p\cdot\parentheses{2^b-1}^2}.
\end{equation*}

Thus, as clients use independent randomness for the quantization, we have that 
\begin{equation*}
\nmse \le \frac{1}{n\cdot p\cdot\parentheses{2^b-1}^2}.
\end{equation*}

Let $r$ be the representation length of each coordinate in the input vector (e.g., $r=32$ for single-precision floats) and $i$ be the number of bits that represent a coordinate's index (e.g., $i=32$, assuming $\log{d} \le 32$). Then, we get that BSQ sends a message with less than $p\cdot (r+i) + b$ bits per coordinate.
Further, this method has $O(d)$ time for encoding and decoding and is GPU-friendly. 

As mentioned in \cref{sec:alg:bsq}, it is possible to encode the indices of the exactly sent coordinates using only $\log{\binom{d}{d \cdot p}}$ bits at the cost of additional complexity. Also, it is possible to send a bit vector to indicate whether each coordinate is exactly sent or quantized and obtain a message with fewer than $p\cdot r + b + 1$ bits. 

However, empirically we find the method of transmitting the indices without encoding most useful as $p\cdot \log d \ll 1$ in our settings, resulting in fast processing time and small bandwidth overhead. 


\section{\alg's \nmse Proof}\label{app:vnmse_proof}
In this appendix, we analyze the \vnmse and then the\nmse of our algorithm. 

Let $\chi = \E[(Z - \widehat Z)^2]$ denote the error of the quantization of a normal random variable $Z\sim\mathcal N(0,1)$. Our analysis is general and covers \alg{}, but is also applicable to any unbiased quantization method that is used following a uniform random rotation preprocessing.

Essentially, we show that \alg{}'s \vnmse is  $\chi$ plus a small additional additive error term (arising because the rotation does not yield exactly normally distributed and independent coordinates) that quickly tends to $0$ as the dimension increases. 

\begin{lemma}\label{lem:vnmse}
 For \alg{}, it holds that: 
 \begin{equation*}
 \vnmse \le \chi + O\parentheses{\sqrt{\frac{\log d}{d}}}~.
 \end{equation*}
\end{lemma}

\begin{proof}
The proof follows similar lines to that of \cite{vargaftik2021drive,EDEN}. However, here the \vnmse expression is different and is somewhat simpler as it takes advantage of our \emph{unbiased} quantization technique.

A rotation preserves a vector's euclidean norm. Thus, according to Algorithms \ref{alg:quickfl_initial_new} and \ref{alg:final} it holds that 
\begin{equation}
\begin{aligned}
        \norm{\overline x-\widehat{\overline x}}_2^2 =& \norm{T\parentheses{\overline x-\widehat{\overline x}}}_2^2 =  \norm{T\parentheses{\overline x}-T\parentheses{\widehat{\overline x}}}_2^2 = \\
        &\norm{\frac{\norm {\overline x}_2}{\sqrt d} \cdot \overline{Z} - \frac{\norm {\overline x}_2}{\sqrt d} \cdot \widehat{\overline{Z}}}_2^2 = \frac{\norm {\overline x}_2^2}{d} \cdot \norm{\overline{Z} - \widehat{\overline{Z}}}_2^2~. 
\end{aligned}
\end{equation}
Taking expectation and dividing by $\norm {\overline x}_2^2$ yields
\begin{equation}
\begin{aligned}
        \vnmse \triangleq \E\brackets{\frac{\norm{\overline x-\widehat{\overline x}}_2^2}{\norm{\overline x}_2^2}} =& \frac{1}{d} \cdot \E\brackets{\norm{\overline{Z} - \widehat{\overline{Z}}}_2^2}  \\ =&\frac{1}{d} \cdot \E\brackets{\sum_{i=0}^{d-1} \parentheses{\overline{Z}[i] - \widehat{\overline{Z}}[i]}^2} = \frac{1}{d} \cdot \sum_{i=0}^{d-1} \E\brackets{\parentheses{\overline{Z}[i] - \widehat{\overline{Z}}[i]}^2}~. 
\end{aligned}
\end{equation}

Let $\overline{\widetilde{Z}}$ be a vector of $d$ independent $\mathcal{N}(0,1)$ random variables. Then the distribution of each transformed and scaled coordinate $\overline{Z}[i]$ is given by $\overline{{Z}}[i] \sim \sqrt d \cdot\frac{\overline{\widetilde{Z}}[i]}{\norm{\overline{\widetilde{Z}}}_2}$ (e.g., see \cite{vargaftik2021drive,muller1959note}).

This means that all coordinates of $\overline{Z}$ follow the same distribution, and thus all coordinates of $\widehat{\overline{Z}}$ follow the same (different) distribution. Thus, without loss of generality, we obtain 

\begin{equation}
\begin{aligned}
        \vnmse \triangleq \E\brackets{\frac{\norm{\overline x-\widehat{\overline x}}_2^2}{\norm{\overline x}_2^2}} &= \E\brackets{\parentheses{\overline{Z}[0] - \widehat{\overline{Z}}[0]}^2} = \E\brackets{\parentheses{\frac{\sqrt d}{\norm{\overline{\widetilde{Z}}}_2} \cdot \overline{\widetilde{Z}}[0] - \widehat{\overline{Z}}[0]}^2}~. 
\end{aligned}
\end{equation}

For some $0 < \alpha < \frac{1}{2}$, denote the event
\begin{align*}
  {\mathscr {E}}=\set{d \cdot (1-\alpha)\leq \norm{\overline{\widetilde{Z}}}_2^2\leq d \cdot (1+\alpha)}~.
\end{align*}

Let ${\mathscr {E}}^c$ be the complementary event of ${\mathscr {E}}$. By Lemma D.2 in \cite{EDEN} it holds that $\Pr[{\mathscr {E}}^c] \le 2\cdot e^{-\frac{\alpha^2}{8}\cdot d}$~. Also, by the law of total expectation

\begin{equation}
\begin{aligned}
        &\E\brackets{\parentheses{\frac{\sqrt d}{\norm{\overline{\widetilde{Z}}}_2} \cdot \overline{\widetilde{Z}}[0] - \widehat{\overline{Z}}[0]}^2} \le \\
        &\E\brackets{\parentheses{\frac{\sqrt d}{\norm{\overline{\widetilde{Z}}}_2} \cdot \overline{\widetilde{Z}}[0] - \widehat{\overline{Z}}[0]}^2 \Bigg| {\mathscr {E}}}\cdot \Pr[{\mathscr {E}}] + \E\brackets{\parentheses{\frac{\sqrt d}{\norm{\overline{\widetilde{Z}}}_2} \cdot \overline{\widetilde{Z}}[0] - \widehat{\overline{Z}}[0]}^2 \Bigg| {\mathscr {E}}^c}\cdot \Pr[{\mathscr {E}}^c] \le \\ &\E\brackets{\parentheses{\frac{\sqrt d}{\norm{\overline{\widetilde{Z}}}_2} \cdot \overline{\widetilde{Z}}[0] - \widehat{\overline{Z}}[0]}^2 \Bigg| {\mathscr {E}}} \cdot \Pr[{\mathscr {E}}] + M \cdot \Pr[{\mathscr {E}}^c]
        ~, 
\end{aligned}
\end{equation}
where $M=\parentheses{\vnmse_{\max}}^2$ and $\vnmse_{\max}$ is the maximal value that the \receiver can reconstruct (i.e., $\max(Q_{b,p})$ in \cref{alg:quickfl_initial_new} or $\max(R)$ in \cref{alg:final}) which is a constant that is \emph{independent} of the vector's dimension. 
Next,

\begin{equation}
\begin{aligned}
&\E\brackets{\parentheses{\frac{\sqrt d}{\norm{\overline{\widetilde{Z}}}_2} \cdot \overline{\widetilde{Z}}[0] - \widehat{\overline{Z}}[0]}^2 \Bigg| {\mathscr {E}}} = \E\brackets{\parentheses{\parentheses{\overline{\widetilde{Z}}[0] - \widehat{\overline{Z}}[0]} + \parentheses{\frac{\sqrt d}{\norm{\overline{\widetilde{Z}}}_2}-1}\cdot \overline{\widetilde{Z}}[0]}^2 \Bigg| {\mathscr {E}}} = \\
&\E\brackets{\parentheses{\overline{\widetilde{Z}}[0] - \widehat{\overline{Z}}[0]}^2 \Bigg| {\mathscr {E}}} 
+ 2\cdot \E\brackets{\parentheses{\overline{\widetilde{Z}}[0] - \widehat{\overline{Z}}[0]} \cdot \parentheses{\frac{\sqrt d}{\norm{\overline{\widetilde{Z}}}_2}-1}\cdot \overline{\widetilde{Z}}[0] \Bigg| {\mathscr {E}}}
+\\
&\E\brackets{\parentheses{\parentheses{\frac{\sqrt d}{\norm{\overline{\widetilde{Z}}}_2}-1}\cdot \overline{\widetilde{Z}}[0]}^2 \Bigg| {\mathscr {E}}}
\end{aligned}
\end{equation}
Also,
\begin{equation}
\begin{aligned}
&\E\brackets{\parentheses{\overline{\widetilde{Z}}[0] - \widehat{\overline{Z}}[0]} \cdot \parentheses{\frac{\sqrt d}{\norm{\overline{\widetilde{Z}}}_2}-1}\cdot \overline{\widetilde{Z}}[0] \Bigg| {\mathscr {E}}} \cdot \Pr[{\mathscr {E}}] \le \\
&\parentheses{\frac{1}{\sqrt{1-\alpha}} - 1} \cdot \abs{\E\brackets{\parentheses{\overline{\widetilde{Z}}[0] - \widehat{\overline{Z}}[0]} \cdot\overline{\widetilde{Z}}[0] \big| {\mathscr {E}}} \cdot \Pr[{\mathscr {E}}]} \le \\
& \parentheses{\frac{1}{\sqrt{1-\alpha}} - 1} \cdot \abs{\E\brackets{\parentheses{\overline{\widetilde{Z}}[0]}^2 - \widehat{\overline{Z}}[0] \cdot\overline{\widetilde{Z}}[0] \bigg| {\mathscr {E}}} \cdot \Pr[{\mathscr {E}}]} \le \\
& \parentheses{\frac{1}{\sqrt{1-\alpha}} - 1} \cdot 1 + \parentheses{\frac{1}{\sqrt{1-\alpha}} - 1} \cdot  \frac{1}{\sqrt{1-\alpha}} = \frac{\alpha}{1-\alpha} \le 2\alpha~.
\end{aligned}
\end{equation}
Here, we used that $$\E\brackets{\parentheses{\overline{\widetilde{Z}}[0]}^2 \big| {\mathscr {E}}} \cdot \Pr[{\mathscr {E}}] \le \E\brackets{\parentheses{\overline{\widetilde{Z}}[0]}^2} = 1~,$$ and that 
\begin{multline}
\E\brackets{\widehat{\overline{Z}}[0] \cdot\overline{\widetilde{Z}}[0] \big| {\mathscr {E}}} \cdot \Pr[{\mathscr {E}}] = \E\brackets{\E\brackets{\widehat{\overline{Z}}[0] \cdot \overline{\widetilde{Z}}[0]  \big|  {\mathscr {E}}, \overline{\widetilde{Z}}}} \cdot \Pr[{\mathscr {E}}] \\= \E\brackets{\frac{\sqrt d}{\norm{\overline{\widetilde{Z}}}_2}\cdot\parentheses{\overline{\widetilde{Z}}[0]}^2 \big| {\mathscr {E}}}\cdot \Pr[{\mathscr {E}}] \le \frac{1}{\sqrt{1-\alpha}} \cdot \E\brackets{\parentheses{\overline{\widetilde{Z}}[0]}^2} = \frac{1}{\sqrt{1-\alpha}}.
\end{multline}
Next, we similarly obtain
\begin{equation}
\begin{aligned}
&\E\brackets{\parentheses{\parentheses{\frac{\sqrt d}{\norm{\overline{\widetilde{Z}}}_2}-1}\cdot \overline{\widetilde{Z}}[0]}^2 \Bigg|\ {\mathscr {E}}} \cdot \Pr[{\mathscr {E}}]\le 
&\parentheses{\frac{1}{\sqrt{1-\alpha}} - 1} + \parentheses{1-\frac{1}{\sqrt{1+\alpha}}} \le 2\alpha.
\end{aligned}
\end{equation}

Thus, 
\begin{equation}
\begin{aligned}
\vnmse \le\E\brackets{\parentheses{\overline{\widetilde{Z}}[0] - \widehat{\overline{Z}}[0]}^2} + 4\alpha +2\cdot e^{-\frac{\alpha^2}{8}\cdot d} \cdot M~.
\end{aligned}
\end{equation}
Setting $\alpha=\sqrt{\frac{8\log d}{d}}$ yields $\vnmse \le\E\brackets{\parentheses{\overline{\widetilde{Z}}[0] - \widehat{\overline{Z}}[0]}^2} + O\parentheses{\sqrt{\frac{\log d}{d}}}$.

Since $\overline{\widetilde{Z}}[0]\sim\mathcal N(0,1)$, we can write
\begin{align*}
    \vnmse \le\E\brackets{\parentheses{Z - \widehat Z}^2} + O\parentheses{\sqrt{\frac{\log d}{d}}}~.\qquad
\end{align*}
This concludes the proof of the Lemma.\qedhere
\end{proof}

We are now ready to prove the theorem.
\qflurrnmse*
\begin{proof}

We start by analyzing \alg's $\chi$. We can write:
\begin{multline}
    \chi = \E\brackets{\parentheses{Z {-} \widehat{Z}}^2} = 
    \E\brackets{\parentheses{Z {-} \widehat{Z}}^2 \mid Z \in [-t_p, t_p]} \cdot \Pr[ Z \in [-t_p, t_p]] \quad+ \\\E\brackets{\parentheses{Z {-} \widehat{Z}}^2 \mid Z \not\in [-t_p, t_p]}\cdot \Pr[ Z \not\in [-t_p, t_p]],    \qquad\qquad\qquad\qquad
\end{multline}
where the first summand is exactly the quantization error of our distribution-aware unbiased BSQ, and the second summand is $0$ as such values are sent exactly.

This means that for any $b$ and $p$, we can exactly compute $\chi$ given the solver's output (i.e., the precomputed quantization-values or tables). For example, it is $\approx 8.58$ for $b=1,\ell=0$ and $p=2^{-9}$. 

By \cref{lem:vnmse}, we get that \alg's \vnmse is $\chi + O\parentheses{\sqrt{\frac{\log d}{d}}} = O(1)$.

Since the clients' quantization is independent, we immediately obtain the result as $\nmse = \frac{1}{n} \cdot \vnmse$.
\end{proof}



\section{\alg{} with client-specific shared randomness}\label{app:hell}
In the most general problem formulation, we assume that the sender and receiver have access to a shared $h\sim U[0,1]$ random variable. This corresponds to having infinite shared random bits. 
Using this shared randomness, for each {message} $x \in \mathcal X_b$, the sending client chooses the probability $S(h,z,x)$ to quantize its value $z \in [-t_p, t_p]$ to the associated value $R(h,x)$ reconstructed by the receiver. We emphasize that $h$ does not need to be transmitted.
We further note that the unbiasedness constraint is now defined with respect to both the private randomness of the client (which is used to pick a message with respect to the distribution $S$) and the (client-specific) shared randomness $h$.
This yields the following optimization problem:

\resizebox{\columnwidth}{!}{
$
\begin{array}{ll@{}l}
\displaystyle{\minimize_{S,R}} & \displaystyle \textcolor{red}{\int_{0}^1}\int_{-t_p}^{t_p} \sum_{x\in \mathcal X_b} S(\textcolor{red}{h,}\  z,x) \cdot \parentheses{z-R(\textcolor{red}{h,}\ x)}^2 \cdot e^{\frac{-z^2}{2}}dz\textcolor{red}{dh\ } \\\\\bigskip
\text{subject to}\\\bigskip
{\normalfont (\textit{\textcolor{gray}{Unbiasedness}})}& \displaystyle \textcolor{red}{\int_{0}^1}\sum_{x \in \mathcal X_b} S(\textcolor{red}{h,}\ z,x) \cdot R(\textcolor{red}{h,}\ x)\textcolor{red}{\ dh} = z, & \forall z \in [-t_p, t_p]\\\bigskip
{\normalfont (\textit{\textcolor{gray}{Probability}})}&\displaystyle \sum_{x\in \mathcal X_b}S(\textcolor{red}{h,}\ z,x)=1,&
                                                                \forall\textcolor{red}{h \in [0,1],}\,\, z \in [-t_p, t_p]
                                                                \\
&S(\textcolor{red}{h,}\ z,x)\ge0,&
                                                                \forall \textcolor{red}{h \in [0,1],}\,\, z \in [-t_p, t_p], \,\, x \in \mathcal X_b

\end{array}
$
}

As in the case without shared randomness, we are unaware of analytical methods for solving this continuous problem. Therefore, we discretize it to get a problem with finitely many variables. To that end, we further discretize the client-specific shared randomness, allowing $h\in\mathcal H_\ell=\set{0,\ldots,2^\ell-1}$ to have $\ell$ shared random bits. As with the number of quantiles $m$, the parameter $\ell$ gives a tradeoff \mbox{between the complexity of the resulting (discretized) problem and the error of the quantization.}

\newpage

{We give the formulation below (with the differences from the no-client-specific-shared-randomness version highlighted in red.)}

{\small
\begin{equation*}
\begin{array}{ll@{}l}
\displaystyle{\minimize_{S',R}} & \displaystyle \sum_{\substack{\textcolor{red}{h\in\mathcal H_\ell}\\i\in\mathcal I_m\\x\in \mathcal X_b}} S'(\textcolor{red}{h,}\  i,x) \cdot \parentheses{\mathcal A_{p,m}(i)-R(\textcolor{red}{h,}\ x)}^2 \\\\\bigskip
\text{subject to}\\\bigskip
{\normalfont (\textit{\textcolor{gray}{Unbiasedness}})}\quad \mbox{\LARGE\textcolor{red}{$\frac{1}{2^{\ell}}\ \ \cdot$} }&\displaystyle\sum_{\substack{\textcolor{red}{h\in\mathcal H_\ell}\\x \in \mathcal X_b}} S'(\textcolor{red}{h,}\ i,x) \cdot R(\textcolor{red}{h,}\ x) = \mathcal A_{p,m}(i), &\qquad \forall\, i \in \mathcal I_m\\\bigskip
{\normalfont (\textit{\textcolor{gray}{Probability}})}&\displaystyle \sum_{x\in \mathcal X_b}S'(\textcolor{red}{h,}\ i,x)=1,&\qquad
                                                                \forall\,\textcolor{red}{h \in \mathcal H_\ell,}\,\, i \in \mathcal I_m
                                                                \\
&S'(\textcolor{red}{h,}\ i,x)\ge0,&\qquad
                                                                \forall\, \textcolor{red}{h \in \mathcal H_\ell,}\,\,i \in \mathcal I_m, \,\, x \in \mathcal X_b

\end{array}
\end{equation*}
}

Unlike without client-specific shared randomness, the solver's output does not directly yield an implementable algorithm, as it only associates probabilities to each $\angles{h,i,x}$ tuple. A natural option is to first stochastically quantize every rotated coordinate $Z\in[-t_p,t_p]$ to a one of the two closest quantiles before running the algorithm that is derived from solving the discrete optimization problem. The resulting pseudocode is shown in \cref{alg:quickfl_sr_sq}.

\begin{algorithm}[t]
\caption{\mbox{QUIC-FL with client-specific shared randomness and stoch. quantizing to quantiles}}
\begin{algorithmic}
    \vspace{0.5mm} \State \hspace*{-4mm}\textbf{Input:} Bit budget $b$, shared random bits $\ell$, BSQ parameter $p$ and its threshold $t_p$ and precomputed quantiles $\mathcal A_{p,m}$, sender table $S$ and receiver table $R$.
    \vspace{-2mm}\\\hspace*{-4mm}\hrulefill
    \State \hspace*{-4mm}\textbf{\Sender{} $c$:}\smallskip
    \State \,\,1.\,\,$\overline Z_c \leftarrow \frac{\sqrt d}{\norm {\overline x_c}_2}\cdot T\parentheses{\overline x_c}$\textcolor{white}{$\big($}\smallskip
    \State \,\,2.\,\,$\overline U_c, \overline I_c\leftarrow\set{ \overline{Z}_c[i] \,\big|\, \abs{\overline{Z}_c[i]} > t_p}, \set{ i \,\big|\, \abs{\overline{Z}_c[i]} > t_p}$\smallskip
    \State \,\,3.\,\,$\overline V_c\leftarrow\set{z \in \overline Z_c \big|\, |z| \le t_p}$\smallskip
    \State \,\,4. $\widetilde{\overline V}_c \leftarrow$ Stochastically quantize $\overline V_c$ using $\mathcal A_{p,m}$\medskip
    \State \,\,5. $\overline H_c \leftarrow \set{\forall i: \mbox{Sample }\overline H_c[i]\sim \mathcal U[\mathcal H_\ell]}$ \smallskip
    \State \,\,6. $\overline X_c \leftarrow \set{\forall i: \mbox{Sample }\overline X_{c}[i]\sim \set{x\ \text{with prob.}\ S({\overline{H}_c[i],\widetilde{\overline V}_c[i],x}) \mid x\in\mathcal X_b}}$ \smallskip
    \State \,\,7. Send $\parentheses{\norm {\overline x_c}_2,\,\overline X_c,\,\overline U_c,\,\overline I_c}$ to \receiver
    \\\hspace*{-4mm}\hrulefill
    %
    \State \hspace*{-4mm}\textbf{\Receiver:}\medskip
    \State \,\,8.\,\, For all $c$:\smallskip 
    \State \,\,9.\,\, \hspace{5.2mm}$\overline H_c \leftarrow \set{\forall i: \mbox{Sample }\overline H_c[i]\sim\mathcal U[\mathcal H_\ell]}$ \smallskip
    \State \,10.\,\, \hspace{3.4mm} $\widehat{\overline V}_{c} \leftarrow  \set{\forall i: R(\overline H_c[i],\overline X_c[i])}$\smallskip
    \State \,11.\,\, \hspace{3.4mm} $\widehat{\overline Z}_{c} \leftarrow$ Merge $\widehat{\overline V}_{c}$ and $\parentheses{\overline U_c,\,\overline I_c}$\smallskip
    \State \,12.\,\, $\widehat{\overline Z}_{\mathit{avg}} \leftarrow \frac{1}{n}\cdot \sum_{c=0}^{n-1} \frac{\norm {\overline x_c}_2}{\sqrt d} \cdot \widehat{\overline Z}_{c}$\smallskip
    \State \,13.\,\, $\widehat {\overline x}_{\mathit{avg}}  \leftarrow T^{-1}\parentheses{\widehat{\overline Z}_{\mathit{avg}}}$

\end{algorithmic}
\label{alg:quickfl_sr_sq}
  

\end{algorithm}
%
%
%

The resulting algorithm is near-optimal in the sense that as the number of quantiles and shared random bits tend to infinity, we converge to an optimal algorithm. In practice, the solver is only able to produce an output for finite $m,\ell$ values; this means that the algorithm would be optimal if coordinates are uniformly distributed over $\mathcal A_{p,m}$.

In words Algorithm~\ref{alg:quickfl_sr_sq} starts similarly to~\cref{alg:quickfl_initial_new} by transforming and scaling the vector before splitting it to the large coordinates (that are sent accurately along with their indices) and the small coordinates (that are to be quantized). The difference is in the quantization process;
\cref{alg:quickfl_sr_sq} first stochastically quantizes each small coordinate to a quantile in $\mathcal A_{p,m}$. Next, the client generates the (client-specific) shared randomness $\overline H_c$ and uses the pre-computed table $S$ to sample a message for each coordinate. That is, for each coordinate $i$, knowing the shared random value $\overline H_c[i]$ and the (rounded-to-quantile) transformed coordinate $\widetilde{\overline V}_c[i]$, for all $x\in\mathcal X_b$, $S({\overline{H}_c[i],\widetilde{\overline V}_c[i],x})$ is the probability that the client should send the message $x$. We note that the message for the $i$'th coordinate is sampled from ${x~\mbox{w.p.}~ S({\overline{H}_c[i],\widetilde{\overline V}_c[i],x})}$ using the client's private randomness. Finally, the client sends its vector's norm, the sampled messages, and the values and indices of the large transformed coordinates.

In turn, the server's algorithm is also similar to~\cref{alg:quickfl_initial_new}, except for the estimation of the small transformed coordinates. In particular, for each client $c$, the server generates the client-specific shared randomness $\overline H_c$ and uses it to estimate each transformed coordinate $i$ using $R(\overline H_c[i],\overline X_c[i])$.

\subsection{Interpolating the Solver's Solution}\label{app:alg2}
A different approach, based on our examination of solver outputs, to yield an implementable algorithm from the optimal solution to the discrete problem is to calculate the message distribution directly from the rotated values without stochastically quantizing as we do in~\cref{alg:quickfl_sr_sq}.
Indeed,  we have found this approach somewhat faster and more accurate.    

A crucial ingredient in getting a human-readable solution from the solver is that we, without loss of generality, force monotonicity in both $h$ and $x$, i.e., $(x\ge x')\wedge(h\ge h')\implies R(h,x)\ge R(h',x').$
We further found symmetry in the optimal sender and receiver tables for small values of $\ell$ and $m$.
We then forced this symmetry to reduce the complexity of the solver's optimization problem size for larger $\ell$ and $m$ values. 
We use this symmetry in our interpolation.  
\paragraph{Examples, intuition and pseudocode.}\label{sec:interpolation}
We first explain the process by considering an example. We consider the setting of $p=\frac{1}{512}$ ($t_p\approx3.097$), $m=512$ quantiles, $b=2$ bits per coordinate, and $\ell=2$ bits of shared randomness.
The solver's solution for the \receiver's table $R$ is given below:

{\small
\begin{table}[H]
\centering
\begin{tabular}{|l|l|l|l|l|}
\hline
& $x=0$ & $x=1$  & $x=2$ & $x=3$ \\ \hline
$h=0$     & -5.48 & -1.23  & \textbf{0.164} & 1.68  \\ \hline
$h=1$     & -3.04 & -0.831 & \textbf{0.490} & 2.18  \\ \hline
$h=2$     & -2.18 & \textbf{-0.490} & 0.831 & 3.04  \\ \hline
$h=3$     & -1.68 & \textbf{-0.164} & 1.23  & 5.48  \\ \hline
\end{tabular}\vspace*{0.5mm}
\caption{Optimal \receiver values ($R({h,x})$) for $x\in\mathcal X_2, h\in\mathcal H_2$ when $p=1/512$ and $m=512$, rounded to $3$ significant digits.} \label{tbl:receiver}
\end{table}
}

The way to interpret the table is that if the server receives a message $x$ and the shared random value was $h$, it should estimate the (quantized) coordinate value as $R(h,x)$. For example, if $x=h=2$, the estimated value would be $0.831$. We now explain what the table means for the sending client, starting with an example.

Consider $\overline{V}_c[i] = 0$. The question is: what message distribution should the sender use, given that $\overline{V}_c[i] \notin \mathcal{A}_{p,m}$  (and without quantizing the value to a quantile)? 
Based on the shared randomness value, we can use 
$$\overline X_c[i]=\begin{cases}
1 & \mbox{If $\overline{H}_c[i] > 1$}\\
2 & \mbox{Otherwise}
\end{cases}.$$ 
%
%
Indeed, we have that the estimate is unbiased as the receiver will estimate one of the bold entries in~\cref{tbl:receiver} with equal probabilities, i.e., $\mathbb E\brackets{\widehat {\overline{V}}_c[i]} = \frac{1}{4}\sum_{\overline{H}_c[i]}R(\overline{H}_c[i],\overline X_c[i])=0$. 


Now, suppose that $\overline{V}_c[i]\in(0,t_p]$ (the case $\overline{V}_c[i]\in[-t_p,0)$ is symmetric). The \sender can increase the \receiver estimate's expected value (compared with the above choice of $\overline X_c[i]$'s distribution for $\overline{V}_c[i]=0$) by moving probability mass to larger $\overline{X}_c[i]$ values for some (or all) of the options for $\overline X_c[i]$.

For any $\overline{V}_c[i]\in(-t_p, t_p)$, there are infinitely many \sender alternatives that would yield an unbiased estimate. 
For example, if $\overline{V}_c[i]=0.1$, below are two \sender options (rounded to three significant digit):
%
\begin{align*}
&S_1(\overline{H}_c[i],\overline{V}_c[i],\overline{X}_c[i])\approx\begin{cases}
1 &   \mbox{If $(\overline{X}_c[i]=1\wedge \overline{H}_c[i]\le 2)$}\\
0.595 & \mbox{If $(\overline{X}_c[i]=2\wedge \overline{H}_c[i]=3)$}\\
0.405 & \mbox{If $(\overline{X}_c[i]=3\wedge \overline{H}_c[i]=3)$}\\
0 & \mbox{Otherwise}
\end{cases}
\\\\
&S_2(\overline{H}_c[i],\overline{V}_c[i],\overline{X}_c[i])\approx\begin{cases}
1 &   \mbox{If $(\overline{X}_c[i]=2\wedge \overline{H}_c[i]\le 1) \vee (\overline{X}_c[i]=1\wedge \overline{H}_c[i]=3)$}\\
0.697 & \mbox{If $(\overline{X}_c[i]=1\wedge \overline{H}_c[i]=2)$}\\
0.303 & \mbox{If $(\overline{X}_c[i]=2\wedge \overline{H}_c[i]=2)$}\\
0 & \mbox{Otherwise}
\end{cases}
\end{align*}
%
Note that while both $S_1$ and $S_2$ produce unbiased estimates, their expected squared errors differ.
Further, since $0.1\not\in\mathcal A_{p,m}$, the solver's output does not directly indicate what is the optimal message distribution, even though the \receiver table is known.

The approach we take corresponds to the following process. We move probability mass from the \emph{leftmost, then uppermost} entry with non-zero mass to its right neighbor in the \receiver table. So, for example, in Table~\ref{tbl:receiver}, as $\overline{V}_c[i]$ increases from 0, we first move mass from the entry $\overline{H}_c[i]=2, \overline{X}_c[i]=1$ to the entry $\overline{H}_c[i]=2, \overline{X}_c[i] =2$.  That is, the \sender, based on its private randomness, increases the probability of message $\overline{X}_c[i]=2$ and decreases the probability of message $\overline{X}_c[i]=1$ when $\overline{H}_c[i]=2$. The amount of mass moved is always chosen to maintain unbiasedness.
At some point, as $\overline{V}_c[i]$ increases, all of the probability mass will have moved, and then we start moving mass from $\overline{H}_c[i]=3, \overline{X}_c[i]=1$ similarly. (And subsequently, from $\overline{H}_c[i]=0,\overline{X}_c[i]=2$ and so on.) 

This process is visualized in Figure~\ref{fig:solvers_alg}. Note that $S(\overline{H}_c[i],\overline{V}_c[i],\overline{X}_c[i])$ values are piecewise linear as a function of $\overline{V}_c[i]$, and further, these values either go from 0 to 1, 1 to 0, or 0 to 1 and back again (all of which follow from our description). We can turn this description into formulae as explained below.

\begin{figure*}[h]
\centering
\includegraphics[clip, trim=0cm 2.2cm 0cm 0cm, width=0.9985\linewidth]{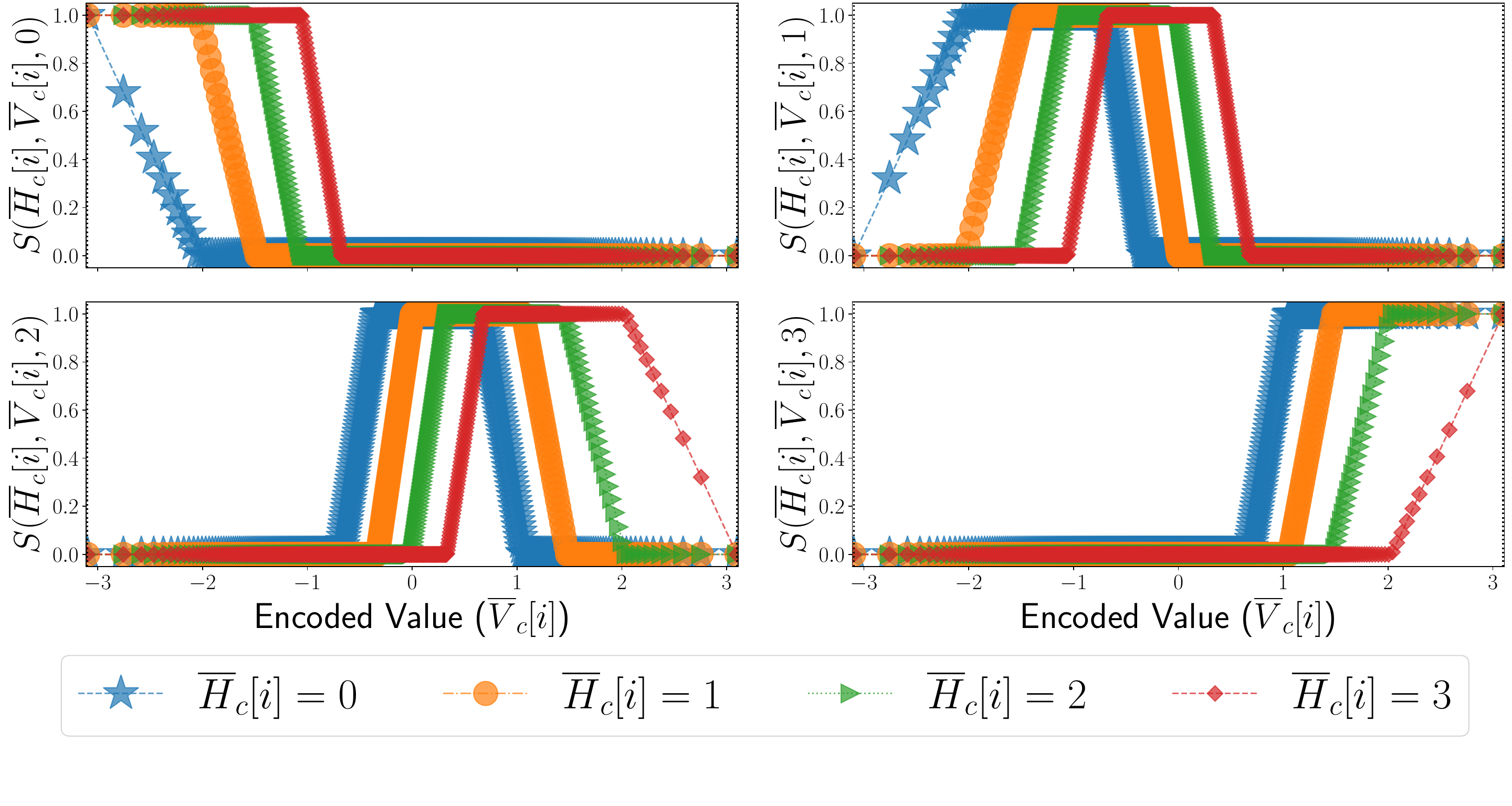}
\caption{The interpolated solver's \sender algorithm for $b=\ell=2, m=512, p=\frac{1}{512}$. 
{Markers correspond to quantiles in $\mathcal A_{p,m}$, and the lines illustrate our interpolation.}\label{fig:solvers_alg}}
\end{figure*}

\looseness=-1
\paragraph{Derivation of the interpolation equations.}
We have found, by applying the mentioned monotonicity constraints (i.e., $(x\ge x')\wedge(h\ge h')\implies R(h,x)\ge R(h',x')$) and examining the solver's solutions for our parameter range, that the optimal approach for the \sender has a structure that we can generalize beyond specific examples.
Namely, when the \receiver table is monotone, the optimal solution \emph{deterministically} quantizes the message to send in all but (at most) one shared randomness value. For instance,  $S_2$ in the example above deterministically quantizes the message if $\overline{H}_c[i]\neq 2$ (sending $\overline{X}_c[i]=1$ if $\overline{H}_c[i]=3$ or $\overline{X}_c[i]=2$ if $\overline{H}_c[i]\in\set{0,1}$), or stochastically quantizes between $\overline{X}_c[i]=1$ and $\overline{X}_c[i]=2$ when $\overline{H}_c[i]=2$. 
{Furthermore, the shared randomness value in which we should stochastically quantize the message is easy to calculate.}

To capture this behavior, we define the following quantities:
\begin{itemize}[align=left, leftmargin=0mm, labelindent=0\parindent, listparindent=0\parindent, labelwidth=0mm,itemindent=!,itemsep=1pt,parsep=0pt,topsep=1pt]
    \item The minimal message $\overline{X}_c[i]$ the \sender may send for $\overline{V}_c[i]$: 
    \begin{equation*}
    \underline x(\overline{V}_c[i])=\max\set{\ x\in\mathcal X_b \quad\bigg | \quad\parentheses{\frac{1}{2^\ell}\cdot\sum_{\overline{H}_c[i]\in\mathcal H_\ell}R(\overline{H}_c[i],x)}\le \overline{V}_c[i]\ }.\end{equation*}
    That is, $\underline x(\overline{V}_c[i])$ is the maximal value such that sending $\underline x(\overline{V}_c[i])$ regardless of the shared randomness value would result in not overestimating $\overline{V}_c[i]$ in expectation.
    For example, as illustrated in~\cref{tbl:receiver} ($b=\ell=2$), we have $\underline x(0)=1$, as the \sender sends either $1$ or $2$ (highlighted in bold) depending on the shared randomness value.
    \item For convenience, we denote $R(h,2^b)=\infty$ for all $h\in\mathcal H_\ell$.  
    Then, the shared randomness value for which the sender stochastically quantizes is given by:
    \begin{equation*}
\underline h(\overline{V}_c[i]) = \max \set{h\in\mathcal H_\ell \,\,\Bigg |\,\, \frac{1}{2^\ell}\cdot\parentheses{\sum_{h'=0}^{h-1}R(h',\underline x(\overline{V}_c[i])+1) + \sum_{h'=h}^{2^\ell-1} R(h',\underline x(\overline{V}_c[i]))}\le \overline{V}_c[i]}\,.
\end{equation*}
    That is, $\underline h(\overline{V}_c[i])$ denotes the maximal value for which sending $\Big(\underline x(\overline{V}_c[i])+1\Big)$ if $\overline{H}_c[i]<\underline h(\overline{V}_c[i])$ or $\underline x(\overline{V}_c[i])$ if $\overline{H}_c[i]\ge\underline h(\overline{V}_c[i])$ would not overestimate $\overline{V}_c[i]$ in expectation. In the same example of~\cref{tbl:receiver} ($b=\ell=2$), we have $\underline h(0)=2$ since sending $\overline{X}_c[i]=2$ for $h\le 2$ would result in an overestimation.
\end{itemize}

%

\paragraph{The sender-interpolated algorithm.}

Let us denote by $\mu$ the expectation we require for $\overline{H}_c[i]=\underline h(\overline{V}_c[i])$ to ensure that our algorithm is unbiased:
\begin{multline*}
    \overline \mu_c[i] \triangleq \mathbb  E\brackets{\ \widehat {\overline{V}}_c[i]\ \big |\ \ \overline{H}_c[i] = \underline h(\overline{V}_c[i])\ } = \\ 
    \ 2^{\ell}\cdot \overline{V}_c[i] - \sum_{h=0}^{\underline h(\overline{V}_c[i])-1}R\parentheses{{h,\underline x(\overline{V}_c[i])+1}} + \sum_{h=\underline h(\overline{V}_c[i])+1}^{2^\ell-1}R\parentheses{{h,\underline x(\overline{V}_c[i])}}\ \ .
\end{multline*} 
We further make the following definitions:
\begin{itemize}
    \item  The probability of rounding the message up to $\underline x(\overline{V}_c[i])+1$ when $\overline{H}_c[i] = \underline h$:
    \begin{equation*}
\overline p_{c}[i] = \frac{\overline \mu_c[i]-R({\overline{H}_c[i],\underline x(\overline{V}_c[i])})}{R({\overline{H}_c[i],\underline x(\overline{V}_c[i])}+1)-R({\overline{H}_c[i],\underline x(\overline{V}_c[i])})}
\end{equation*}
\item The probability of rounding the message down to $\underline x(\overline{V}_c[i])$ when $\overline{H}_c[i] = \underline h$:
\begin{equation*}
\overline q_{c}[i] = 1 - \overline p_{c}[i] = \frac{R({\overline{H}_c[i],\underline x(\overline{V}_c[i])}+1)-\overline \mu_c[i]}{R({\overline{H}_c[i],\underline x(\overline{V}_c[i])}+1)-R({\overline{H}_c[i],\underline x(\overline{V}_c[i])})}.
\end{equation*}

\end{itemize}

Then, for any shared randomness value $\overline{H}_c[i]\in\mathcal H_\ell$, to-be-quantized value $ \overline{V}_c[i]\in[-t_p,t_p]$, and message $x \in \mathcal X_b$, the interpolated algorithm works as follows:
\begin{align}
S(\overline{H}_c[i],\overline{V}_c[i],x)=\begin{cases}
1 & \mbox{If $\parentheses{x=\underline x(\overline{V}_c[i]) \wedge \overline{H}_c[i]>\underline h}\vee \parentheses{x=\underline x(\overline{V}_c[i])+1 \wedge \overline{H}_c[i]<\underline h}$}\\
\overline p_{c}[i] & \mbox{If $(x=\underline x(\overline{V}_c[i])+1 \wedge \overline{H}_c[i] = \underline h)$}\vspace{1mm}\\
\overline q_{c}[i] & \mbox{If $(x=\underline x(\overline{V}_c[i]) \wedge \overline{H}_c[i] = \underline h)$}\\
0 & \mbox{Otherwise}
\end{cases}\quad.\label{Eq:SQ_expectation}
\end{align} 

Namely, if $\overline{H}_c[i]<\underline h$, the \sender deterministically sends $\parentheses{\underline x(\overline{V}_c[i])+1}$ and if $\overline{H}_c[i]>\underline h$, {the \sender deterministically sends ${\underline x(\overline{V}_c[i])}$. Finally, if $\overline{H}_c[i] = \underline h$, it sends $\parentheses{\underline x(\overline{V}_c[i])+1}$ with probability $\overline p_{c}[i]$ and ${\underline x(\overline{V}_c[i])}$ otherwise.
Indeed, by our choice of $\overline \mu_c[i]$, the algorithm is guaranteed to be unbiased for all $\overline{V}_c[i]\in[-t_p,t_p]$.


The pseudocode of this variant is given by~\cref{alg:final}.

\begin{algorithm}[]
\small
\caption{QUIC-FL with client-specific shared randomness and \sender interpolation}
\begin{algorithmic}
    \vspace{0.5mm} \State \hspace*{-4mm}\textbf{Input:} Bit budget $b$, shared random bits $\ell$, BSQ parameter $p$ and its threshold $t_p$ and precomputed quantiles $\mathcal A_{p,m}$, and receiver table $R$. (The table $S$ is not needed.)
    \vspace{-2mm}\\\hspace*{-4mm}\hrulefill
    \State \hspace*{-4mm}\textbf{\Sender{} $c$:}\smallskip
    \State \,\,1.\,\,$\overline Z_c \leftarrow \frac{\sqrt d}{\norm {\overline x_c}_2}\cdot T\parentheses{\overline x_c}$\textcolor{white}{$\big($}\smallskip
    \State \,\,2.\,\,$\overline U_c, \overline I_c\leftarrow\set{ \overline{Z}_c[i] \,\big|\, \abs{\overline{Z}_c[i]} > t_p}, \set{ i \,\big|\, \abs{\overline{Z}_c[i]} > t_p}$\smallskip
    \State \,\,3.\,\,$\overline V_c\leftarrow\set{z \in \overline Z_c \big|\, |z| \le t_p}$\smallskip
    \State \,\,4. $\overline H_c \leftarrow \set{\forall i: \mbox{Sample }\overline H_c[i]\sim \mathcal U[\mathcal H_\ell]}$ \smallskip
    \State \,\,5. $\overline X_c \leftarrow \set{\forall i: \mbox{Sample }\overline X_{c}[i]\sim \set{x\ \text{with prob.}\ S({\overline{H}_c[i],\overline{V}_c[i],x})}}$ \quad\textcolor{gray}{\hfill$\triangleright$\,\, According to~\cref{Eq:SQ_expectation}}\smallskip
    \State \,\,6. Send $\parentheses{\norm {\overline x_c}_2,\,\overline X_c,\,\overline U_c,\,\overline I_c}$ to \receiver
    \\\hspace*{-4mm}\hrulefill
    %
    \State \hspace*{-4mm}\textbf{\Receiver:}\medskip
    \State \,\,7.\,\, For all $c$:\medskip 
    \State \,\,8.\,\, \hspace{5.2mm}$\overline H_c \leftarrow \set{\forall i: \mbox{Sample }\overline H_c[i]\sim \mathcal U[\mathcal H_\ell]}$ \smallskip
    \State \,\,9.\,\, \hspace{5.2mm}$\widehat{\overline V}_{c} \leftarrow  \set{\forall i: R(\overline H_c[i],\overline X_c[i])}$\smallskip
    \State \,10.\,\, \hspace{4.1039864mm}$ \widehat{\overline Z}_{c} \leftarrow$ Merge $\widehat{\overline V}_{c}$ and $\parentheses{\overline U_c,\,\overline I_c}$\smallskip
    \State \,11.\,\, $\widehat{\overline Z}_{\mathit{avg}} \leftarrow \frac{1}{n}\cdot \sum_{c=0}^{n-1} \frac{\norm {\overline x_c}_2}{\sqrt d} \cdot \widehat{\overline Z}_{c}$\smallskip
    \State \,12.\,\, $\widehat {\overline x}_{\mathit{avg}}  \leftarrow T^{-1}\parentheses{\widehat{\overline Z}_{\mathit{avg}}}$

\end{algorithmic}
  

\label{alg:final}
\end{algorithm}


\section{Performance of \alg with the Randomized Hadamard Transform} \label{app:hadamard}

As described earlier, while ideally we would like to use a fully random rotation on the $d$-dimensional sphere as the first step to our algorithms, this is computationally expensive.  Instead, we suggest using a randomized Hadamard transform (RHT), which is computationally more efficient. We formally show below that using RHT has the same asymptotic guarantee as with random rotations, albeit with a larger constant (constant factor increases in the fraction of exactly sent coordinates and \nmse).  
Namely, we show that (1) the expected number of transformed and scaled coordinates that fall outside $[-t_p,t_p]$ (for the same choice of $t_p$ as a function of $p$), is bounded by $3.2p$; (2) that we still get $O(1/n)$ NMSE for any $b\ge 1$. 
Further, we find that running \alg with RHT and $b+1$ bits per quantized coordinate has a lower \nmse than \alg with a uniform random rotation for $p=2^{-9}$ and any $b\in\set{1,2,3}$.

We note that some works suggest using two or three successive randomized Hadamard transforms to obtain something that should be closer to a uniform random rotation \cite{yu2016orthogonal,10.5555/2969239.2969376}.  This naturally takes more computation time. In our case, and in line with previous works~\cite{vargaftik2021drive,EDEN}, we find empirically that one RHT appears to suffice. However, unlike these works, our algorithm remains provably unbiased and maintains the $O(1/n)$ \nmse guarantee.
Determining better provable bounds using two or more RHTs is left as an open problem.  
 \begin{figure*}[h]
\centering
\hspace*{-0mm}\includegraphics[width=0.7658\linewidth]{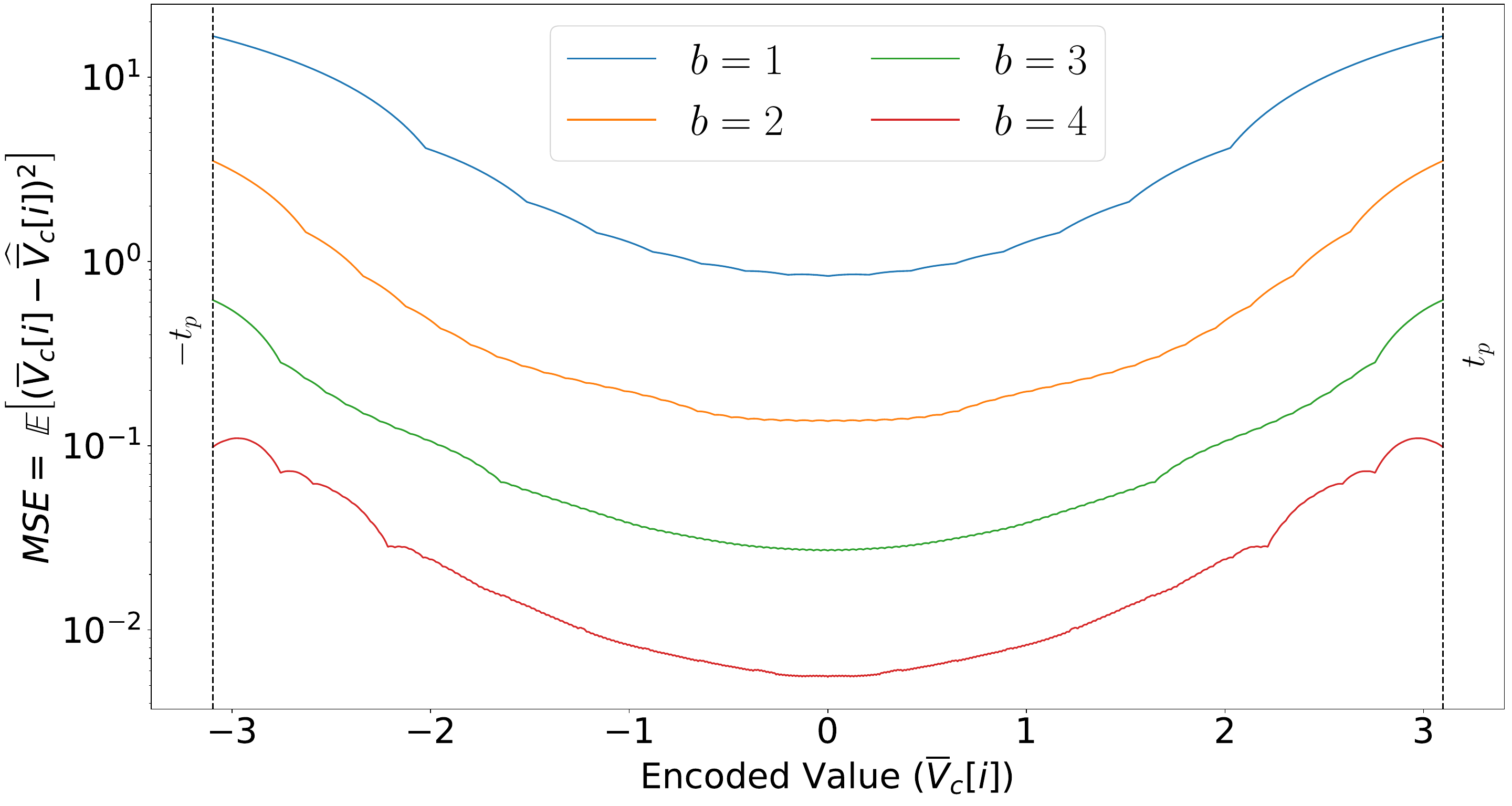}
\caption{Expected squared error as a function of the encoded value (for $p=\frac{1}{512}, m=512$).}
\label{fig:errorAsAFunctionOfZ}
\end{figure*}
\begin{theorem}
Let ${\overline x}\in\mathbb R^d$, let $T_{RHT}(\overline x)$ be the result of a randomized Hadamard transform on $\overline x$, and let $\mathfrak Z=\overline{V}_c[i]=\frac{\sqrt d}{\norm{\overline x}_2}T_{RHT}(x)[i]$ be a coordinate in the transformed and scaled vector.
For any $p$, $\Pr\brackets{\mathfrak Z\not\in[-t_p,t_p]}\le 3.2p.$
\end{theorem}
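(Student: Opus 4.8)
The plan is to reduce a coordinate of the randomized Hadamard transform to a unit-variance weighted Rademacher sum and then apply a sharp Gaussian-comparison inequality. Write the (single) randomized Hadamard transform as $\mathcal R_{RHT}=\frac{1}{\sqrt d}HD$, where $H\in\set{\pm1}^{d\times d}$ is a Hadamard matrix ($H^\top H=dI$) and $D=\mathrm{diag}(s_1,\dots,s_d)$ with $s_1,\dots,s_d$ i.i.d.\ uniform on $\set{\pm1}$ (padding $d$ to a power of two if needed, which does not affect the statement). The $i$-th coordinate is $\mathcal R_{RHT}(x)[i]=\frac{1}{\sqrt d}\sum_{j=1}^d H_{ij}s_j x_j$, and for fixed $i$ the variables $(H_{ij}s_j)_j$ are again i.i.d.\ uniform on $\set{\pm1}$; calling them $\eta_j$ and setting $w_j=x_j/\norm{x}_2$ we get $\mathfrak Z=\frac{\sqrt d}{\norm{x}_2}\mathcal R_{RHT}(x)[i]=\sum_{j=1}^d \eta_j w_j$ with $\sum_j w_j^2=1$. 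Thus, for every input $x$ and every $d$, $\mathfrak Z$ is a symmetric, mean-zero, unit-variance weighted Rademacher sum.

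Since $T_p$ is by definition the two-sided Gaussian quantile, $\Pr_{Z\sim\mathcal N(0,1)}\brackets{|Z|>T_p}=p$, the claim becomes an instance of: for every unit-variance weighted Rademacher sum $S$ and every $t\ge0$, $\Pr\brackets{|S|\ge t}\le 3.2\,\Pr\brackets{|Z|\ge t}$. I would prove this by cases on $t$. For $t\le1$ (equivalently $p\ge 2(1-\Phi(1))\approx 0.317$) the trivial bound $\Pr\brackets{|S|\ge t}\le 1\le 3.2\,p$ suffices. For $t>1$ with $p\ge 1/6.4$ (i.e.\ roughly $t\le 1.42$), monotonicity of the tail together with Tomaszewski's inequality $\Pr\brackets{|S|\le1}\ge\frac12$ gives $\Pr\brackets{|S|\ge t}\le\Pr\brackets{|S|>1}\le\frac12\le 3.2\,p$. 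The remaining range is $t>1.42>\sqrt2$, where I would invoke the tight Gaussian-comparison inequality for weighted Rademacher sums (of Bentkus / Bentkus--Dzindzalieta type), $\Pr\brackets{|S|\ge t}\le c_\star\,\Pr\brackets{|Z|\ge t}$ with $c_\star=\frac{1}{4(1-\Phi(\sqrt2))}\approx 3.18\le 3.2$; the extremal ratio is attained in the limit $t\to\sqrt2^-$ by $S=\frac{1}{\sqrt2}(\eta_1+\eta_2)$, which is exactly what pins the constant near $3.2$. The three cases cover all $t>0$, so plugging $t=T_p$ gives $\Pr\brackets{\mathfrak Z\notin[-T_p,T_p]}\le 3.2\,p$.

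If a self-contained argument is preferred over citing the comparison inequality, for $t>\sqrt2$ one can follow Bentkus's route: bound $\Pr\brackets{S\ge t}\le\inf_{x<t}\E\brackets{(S-x)_+^2}/(t-x)^2$, use the second-moment domination $\E\brackets{(S-x)_+^2}\le\E\brackets{(Z-x)_+^2}$ valid for all $x$ (for unit-variance Rademacher sums), and then verify by a one-variable optimization over $x$ that $\inf_{x<t}\E\brackets{(Z-x)_+^2}/(t-x)^2\le c_\star\,(1-\Phi(t))$; a factor of two accounts for $|S|$.

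The main obstacle is exactly this Gaussian-comparison step, and it is where essentially all the work lies, because the target constant is sharp up to the rounding from $\approx 3.18$ to $3.2$. Soft tools are too lossy here: Hoeffding gives only $\Pr\brackets{|S|\ge t}\le 2e^{-t^2/2}$, and dividing by the standard Gaussian lower bound $\Pr\brackets{|Z|\ge t}\ge\sqrt{2/\pi}\,\frac{t}{t^2+1}e^{-t^2/2}$ leaves a ratio $\sqrt{2\pi}\,(t^2+1)/t\ge 2\sqrt{2\pi}\approx 5.0$, well above $3.2$; polynomial moment bounds (Chebyshev, fourth moment) decay far too slowly to match $\Pr\brackets{|Z|\ge t}$ for large $t$. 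So one genuinely has to exploit the fine structure of extremal Rademacher sums. A minor additional point is being careful with $\ge$ versus $>$ and with the power-of-two padding in the Hadamard construction, neither of which changes the bound.
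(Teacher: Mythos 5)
Your proof is correct and uses the same core reduction as the paper: scale one coordinate of the RHT to see it is a unit-norm weighted Rademacher sum $\sum_j \eta_j w_j$ with $\sum_j w_j^2 = 1$, then invoke the Bentkus--Dzindzalieta Gaussian-comparison bound and double by symmetry to pass to the two-sided tail. The paper's entire proof is a one-line appeal to the Bentkus result stated for \emph{all} $t\in\mathbb R$, namely $\Pr[S\ge t]\le \Pr[Z\ge t]/(4\Pr[Z\ge\sqrt 2])\approx 3.1787\,\Pr[Z\ge t]$, so no case split is needed. Your three-case argument is a valid but unnecessary precaution: you treat the comparison as if it might fail for $t<\sqrt 2$, so you patch the middle range with Tomaszewski's inequality ($\Pr[|S|\le1]\ge\tfrac12$), which is a much harder theorem (Keller--Klein 2020) than the step it replaces. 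If you cite Bentkus--Dzindzalieta in the form used in the paper, the small-$t$ and Tomaszewski cases collapse and the proof matches the paper's. One tiny caution worth keeping: the one-sided-to-two-sided step does rely on the exact symmetry of the Rademacher sum, which your reduction correctly establishes; the paper leaves this implicit.
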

\begin{proof}
This follows from the theorem by \citet{bentkus2015tight} (\cref{thm:radamacherSum}), which we restate below.
\end{proof}
\begin{theorem}[\citet{bentkus2015tight}]\label{thm:radamacherSum}
Let $\epsilon_1,\ldots,\epsilon_d$ be i.i.d. Radamacher random variables and let $\overline a\in\mathbb R^d$ such that $\norm {\overline a}_2^2{\le} 1$. For any {\small$t\in\mathbb R$, $\Pr\brackets{\displaystyle\sum_{i=0}^{d-1} \overline a[i]\cdot \epsilon_i \ge t}\le \frac{\Pr\brackets{Z \ge t}}{4\Pr\brackets{Z \ge \sqrt 2}}\approx 3.1787\Pr\brackets{Z \ge t}$}, for $Z\sim\mathcal N(0,1)$.\qedhere
\end{theorem}

In what follows, we present a general approach to bound the quantization error of each transformed and scaled coordinate (and thus, the \alg's \nmse). 
 Our method splits $[0,t_p]$ (the argument is symmetric for $[-t_p,0]$) into several (e.g., three) intervals $\mathfrak I_0,\ldots,\mathfrak I_w$ (for some $w\in\mathbb N^+$), such that the partitioning satisfies two properties:
 \begin{itemize}
     \item The maximal error for the $i$'th interval, $\max_{z\in \mathfrak I_i}\mathbb E\brackets{\parentheses{z-\widehat{z}}^2}$, is lower than the $j$'th interval, for any $j< i$.
     \item The probability that a normal random variable $Z\sim \mathcal N(0,1)$ falls outside $\mathfrak I_0$ is less than $1/3.2$.
 \end{itemize}
 These two properties allow us to use~\cref{thm:radamacherSum} to upper bound the resulting quantization error.
 
 We exemplify the method using $p=\frac{1}{512}$, the parameter of choice for our evaluation, although it is applicable to any $p$. 
Since we believe it provides only a loose bound, we do not optimize the argument beyond showing the technique.

\begin{theorem}
Fix $p=\frac{1}{512}$; let $\overline x_c\in\mathbb R^d$ and denote by $\mathfrak Z=\overline{V}_c[i]=\frac{\sqrt d}{\norm{\overline x_c}_2}T_{RHT}(\overline x_c)[i]$ its $i$'th coordinate after applying RHT and scaling. Denoting by  $E_b = \mathbb E\brackets{(\mathfrak Z - \widehat{\mathfrak Z_b})^2}$ the mean squared error using $b$ bits per quantized coordinate, we have $E_1\le4.831$, $E_2\le0.692$, $E_3\le0.131$, $E_4\le0.0272$~.
\end{theorem}
\begin{proof}

We bound the MSE of quantizing $\mathfrak Z$, leveraging Theorem~\ref{thm:radamacherSum}.
Since the MSE, as a function of $\mathfrak Z$, is symmetric around $0$ (as illustrated in Figure~\ref{fig:errorAsAFunctionOfZ}), we analyze the $\mathfrak Z\ge 0$ case.

We split $[0,t_p]$ into intervals that satisfy the above properties, e.g., $\mathfrak I_0=[0,1.5]$, $\mathfrak I_1=(1.5,2.2]$, $\mathfrak I_2=(2.2,t_p]$. We note that this choice of intervals is not optimized and that a finer-grained partition to more intervals can improve the error bounds. 
Next, using Theorem~\ref{thm:radamacherSum}, we get that 
\begin{itemize}
    \item $P_0\triangleq\Pr[\mathfrak Z\not\in \mathfrak I_0]\le3.2\Pr[ Z\not\in \mathfrak I_0]\le 0.427$.
    \item $P_1\triangleq\Pr[\mathfrak Z\not\in (\mathfrak I_0\cup \mathfrak I_1)]\le3.2\Pr[ Z\not\in(\mathfrak I_0\cup \mathfrak I_1)]\le 0.089$.
\end{itemize}

Next, we provide the maximal error for each bit budget $b$ and such interval:
\begin{table}[h]
\centering
\begin{tabular}{|l|l|l|l|l|}
\hline
                     & ${b=1}$ & ${b=2}$ & ${b=3}$ & ${b=4}$ \\ \hline
${\mathfrak I_0}$   & $2.063$        & $0.267$        & $0.056$        & $0.0134$       \\ \hline
${\mathfrak I_1}$ & $6.39$         & $0.67$         & $0.128$        & $0.0285$       \\ \hline
${\mathfrak I_2}$   & $16.73$        & $3.51$         & $0.617$        & $0.11$         \\ \hline
\end{tabular}
\caption{For each interval $\mathfrak I_i,~i\in\set{0,1,2}$ and bit budget $b\in\set{1,2,3,4}$, depicted is the maximal MSE, i.e., $\max_{z\in \mathfrak I_i}\mathbb E\brackets{\parentheses{z-\widehat{z}}^2}$.}
\end{table}

Note that for any $b\in\set{1,2,3,4}$, the MSEs in $\mathfrak I_2$ are strictly larger than those in $\mathfrak I_1$ which are strictly larger than those in $\mathfrak I_0$.
This allows us to derive formal bounds on the error.
For example, for $b=1$, we have that the error is bounded by
\begin{equation*}
E_1\le (1-P_0)\cdot 2.063 + (P_0-P_1)\cdot 6.39 + P_1\cdot 16.73 \le 4.831.
\end{equation*}
Repeating this argument, we also obtain:
\begin{align*}
    E_2&\le (1-P_0)\cdot 0.267  + (P_0-P_1)\cdot 0.67   + P_1\cdot 3.51 \le 0.692 \\
    E_3&\le (1-P_0)\cdot 0.056  + (P_0-P_1)\cdot 0.128  + P_1\cdot 0.617 \le 0.131 \\
    E_4&\le (1-P_0)\cdot 0.0134 + (P_0-P_1)\cdot 0.0285 + P_1\cdot 0.11 \le 0.0272.\qedhere
\end{align*}
\end{proof}


\section{Shakespeare Experiments details}\label{app:expr-details}

The Shakespeare next-word prediction discussed in \S\ref{sec:expr:next-word} was first suggested in \cite{mcmahan2017communication} to naturally simulate a realistic heterogeneous federated learning setting. Its dataset consists of 18,424 lines of text from Shakespeare plays \cite{shakespeare} partitioned among the respective 715 speakers (i.e., clients). We train a standard LSTM recurrent model \cite{Hochreiter1997LongSM} with ${\approx}820K$ parameters and follow precisely the setup described in \cite{reddi2021adaptive} for the Adam server optimizer case. We restate the hyperparameters for convenience in Table~\ref{table:reddi_params}.

\begin{table*}[h]
\centering
\begin{tabular}{|l||c|c|c|c|c|c|}
\hline
\textbf{Task} & \textbf{Clients per round} & \textbf{Rounds} & \textbf{Batch size} & \textbf{Client lr} & \textbf{Server lr} & \textbf{Adam's $\epsilon$} \\ \hline \hline
Shakespeare & 10 & 1200 & 4 & 1  &  $10^{-2}$ & $10^{-3}$ \\ \hline
\end{tabular}
\caption{Hyperparameters for the Shakespeare next-word prediction experiments.}
\label{table:reddi_params} 
\end{table*}

\section{Additional Evaluation}\label{app:eval}

Our code will be released as open source upon publication.
As discussed, we use $p=1/512$, $\ell=6$ for $b=1$, $\ell=5$ for $b=2$, and $\ell=4$ for $b\in\set{3,4}$.  


\subsection{Image Classification}\label{app:imagecfl}


We evaluate \alg against other schemes with $10$ persistent clients over uniformly distributed CIFAR-10 and CIFAR-100 datasets~\cite{krizhevsky2009learning}.
We also evaluate \emph{Count-Sketch}~\cite{charikar2002finding} (denoted CS), often used for federated compression schemes (e.g.,~\cite{ivkin2019communication}) and EF21~\cite{richtarik2021ef21} a recent SOTA error-feedback framework that uses top-k as a building block with $k = 0.05 \cdot d$ (translates to 1.6 bits per-coordinate ignoring the overhead of indices encoding overhead).
For QSGD, we use twice the bandwidth of the other algorithms (one bit for sign and another for stochastic quantization). We note that QSGD also has a more accurate variant that uses variable-length encoding~\cite{NIPS2017_6c340f25}. However, it is not GPU-friendly, and therefore, as with other variable-length encoding schemes, as we have discussed previously, we do not include it in the experiment.

For CIFAR-10 and CIFAR-100, we use the ResNet-9~\cite{he2016deep} and ResNet-18~\cite{he2016deep} architectires, and use learning rates of $0.1$ and $0.05$, respectively.
For both datasets, the clients perform a single optimization step at each round. 
Our setting includes an SGD optimizer with a cross-entropy loss criterion, a batch size of 128, and a bit budget $b=1$ for the DME methods (except for EF21 and QSGD as stated above).
The results are shown in Figure~\ref{fig:cross_silo_w_qsgd}, with a rolling mean average window of 500 rounds. As shown, \alg is competitive with EDEN and the Float32 baseline and is more accurate than other methods.


%
Next, we repeat the above CIFAR-10 and CIFAR-100 experiments with the same bandwidth budgets but consider a cross-device setup with the following changes: there are $50$ clients (instead of $10$) and at each training round, $10$ out of $50$ clients are randomly selected and perform training over $5$ local steps (instead of $1$). 

Figure~\ref{fig:cross_device_w_qsgd} shows the results with a rolling mean window of 200 rounds. Again, \alg is competitive {with the asymptotically slower EDEN and the uncompressed baseline. Kashin-TF is less accurate, followed by Hadamard.}



\begin{figure*}[h!]
\centering
\includegraphics[trim={0 2.1cm 0 0},clip,width=1\linewidth]{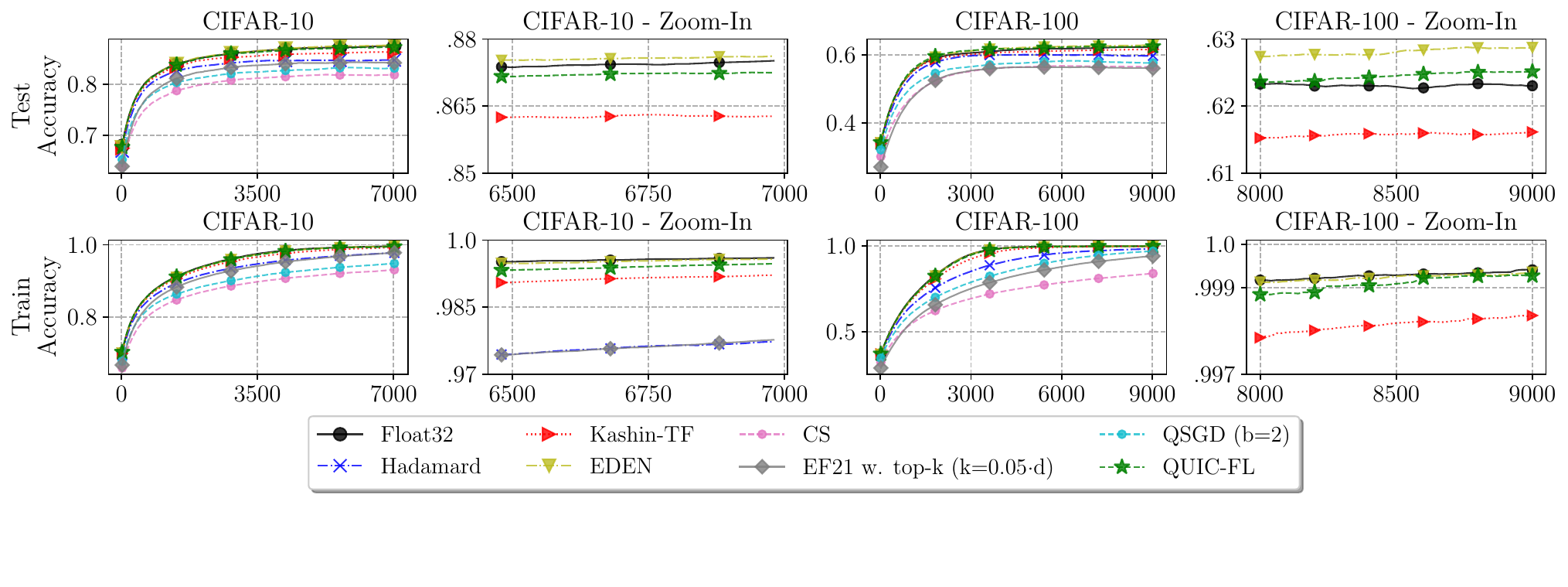}
\caption{Cross-silo federated learning.}
\label{fig:cross_silo_w_qsgd}
\end{figure*}

\begin{figure*}[h!]
\centering
\includegraphics[trim={0 2.1cm 0 0},clip,width=1\linewidth]{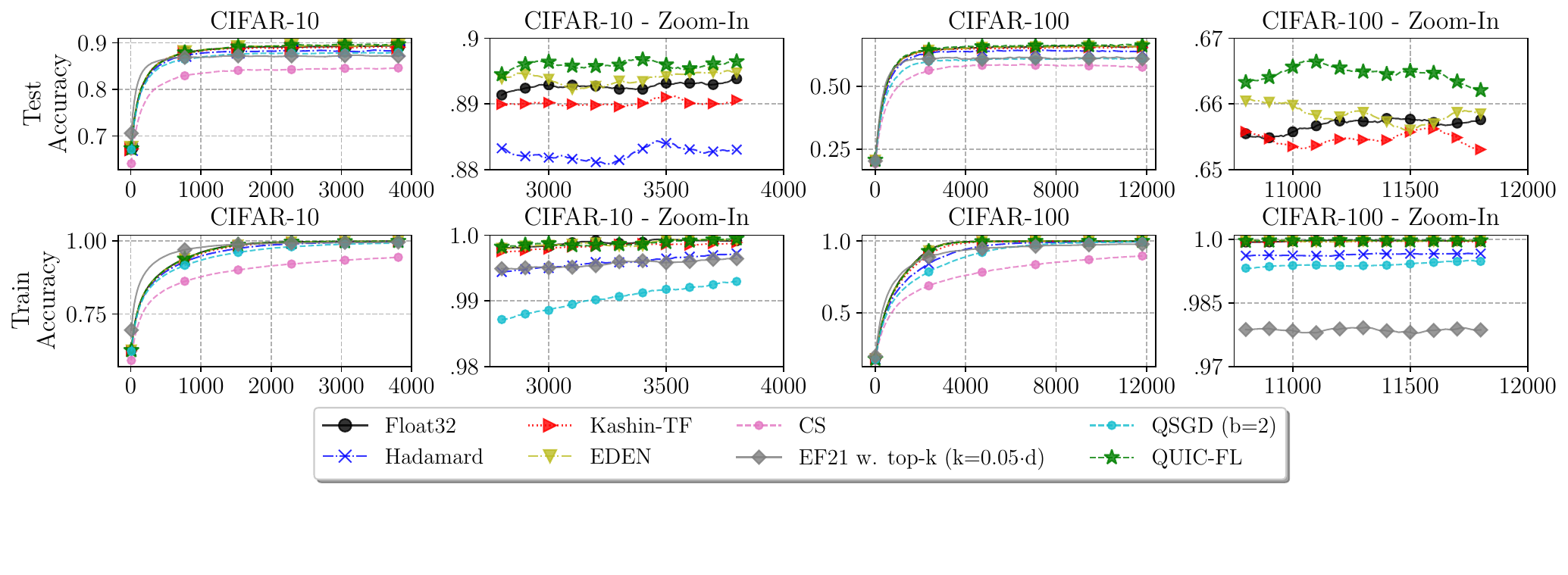}
\caption{Cross-device federated learning.}
\label{fig:cross_device_w_qsgd}
\vspace{2mm}
\end{figure*}



\subsection{DME as a Building Block}\label{app:ef21_topk_qfl}
We pick EF21~\cite{richtarik2021ef21} as an example framework that uses DME as a building block. In the paper, EF21 is used in conjunction with top-$k$ as the compressor that is used by the clients to transmit their messages, and the mean of the messages is estimated at the server. As shown in \cref{fig:EF21QUIC}, using EF21 with \alg instead of top-$k$ significantly improves the accuracy of EF21 despite using less bandwidth. For example, top-$k$ with $k=0.1\cdot d$ needs to use 3.2 bits per coordinate on average to send the values (in addition to the overhead of encoding the indices) while having accuracy that is lower than EF21 with \alg  and $b=2$ bits per coordinate.
\begin{figure*}[h!]
\centering
\includegraphics[trim={0 2.1cm 0 0},clip,width=1\linewidth]{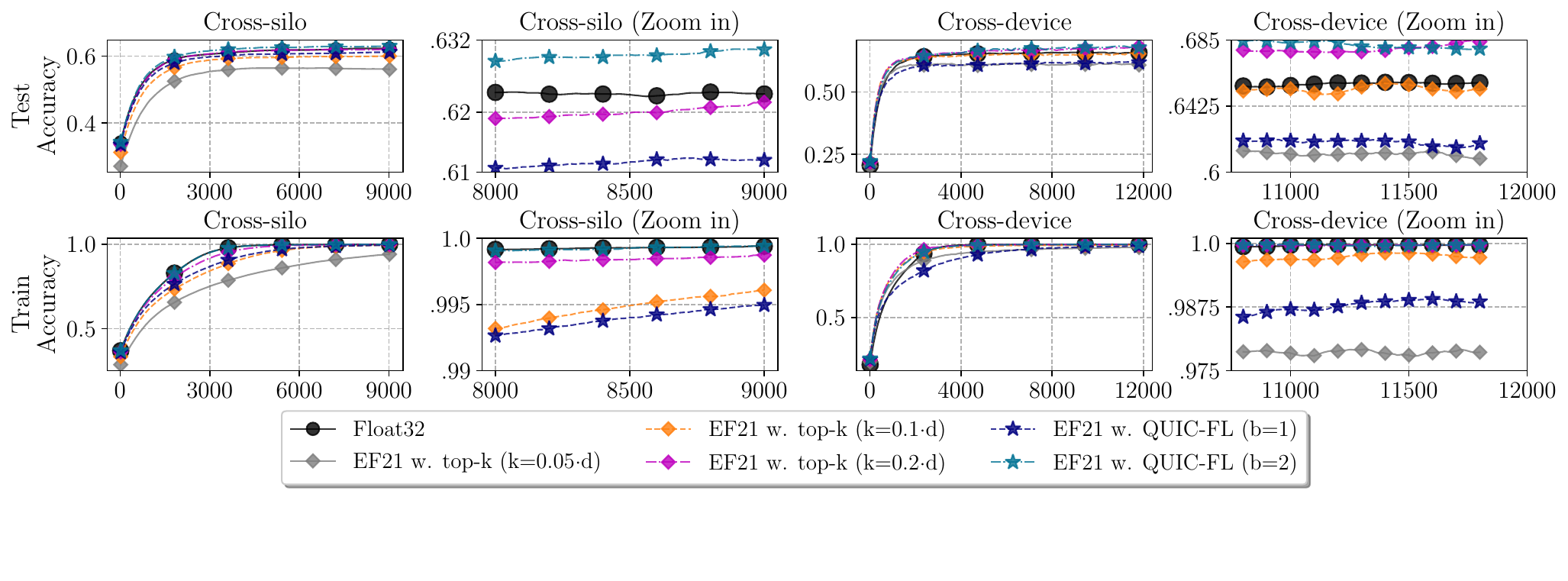}
\caption{The accuracy of EF21 with top-$k$ and \alg as building blocks for DME.}
\label{fig:EF21QUIC}
\end{figure*}
%
%
\subsection{Distributed Power Iteration}\label{app:subsec:pi}

\begin{figure*}[h!]
\centering
\includegraphics[trim={0.25cm 1.9cm 0.25cm 0},clip,width=1\linewidth]{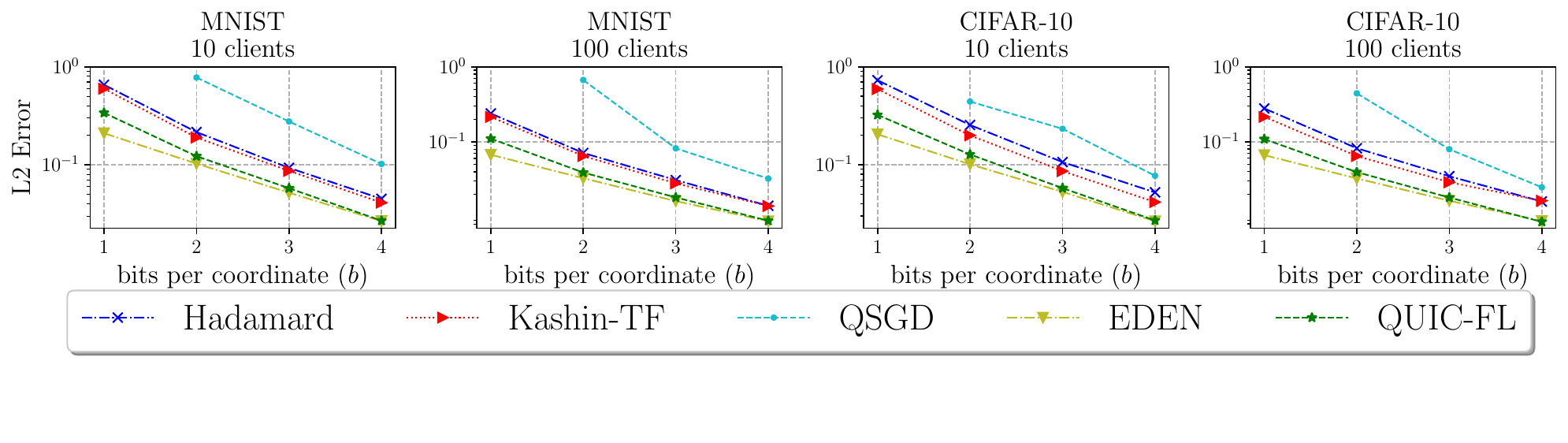}
\caption{Distributed power iteration of MNIST and CIFAR-10 with 10 and 100 \senders.}
\label{fig:pi}
\end{figure*}

\looseness=-1
We simulate $10$ \senders that distributively compute the top eigenvector in a matrix (i.e., the matrix rows are distributed among the \senders). Particularly, each \sender executes a power iteration, compresses its top eigenvector, and sends it to the \receiver. The \receiver updates the next estimated eigenvector by the averaged diffs (of each client to the eigenvector from the previous round) and scales it by a learning rate of $0.1$. Then, the estimated eigenvector is sent by the \receiver to the \senders and the next round can begin. 

Figure~\ref{fig:pi} presents the L2 error of the obtained eigenvector by each compression scheme when compared to the eigenvector that is achieved without compression.
The results cover bit budget $b$ from one bit to four bits for both MNIST and CIFAR-10~\cite{krizhevsky2009learning, lecun1998gradient, lecun2010mnist} datasets.
Each distributed power iteration simulation is executed for 50 rounds for the MNIST dataset \mbox{and for 200 rounds for the CIFAR-10 dataset.}

As shown, \alg has an accuracy that is competitive with that of EDEN (especially for $b\ge2$) while having asymptotically faster decoding, as EDEN requires decompressing the vector for each client independently. 
At the same time, \alg is considerably better in terms of accuracy than other algorithms that offer fast decoding time. Also, Kashin-TF is not unbiased (as illustrated by Figure~\ref{fig:sensitivity}), and is, therefore, less competitive for a larger number of clients.


\begin{figure*}[h!]
\centering
\includegraphics[trim={0 0 0 3cm},clip,width=0.95\linewidth]{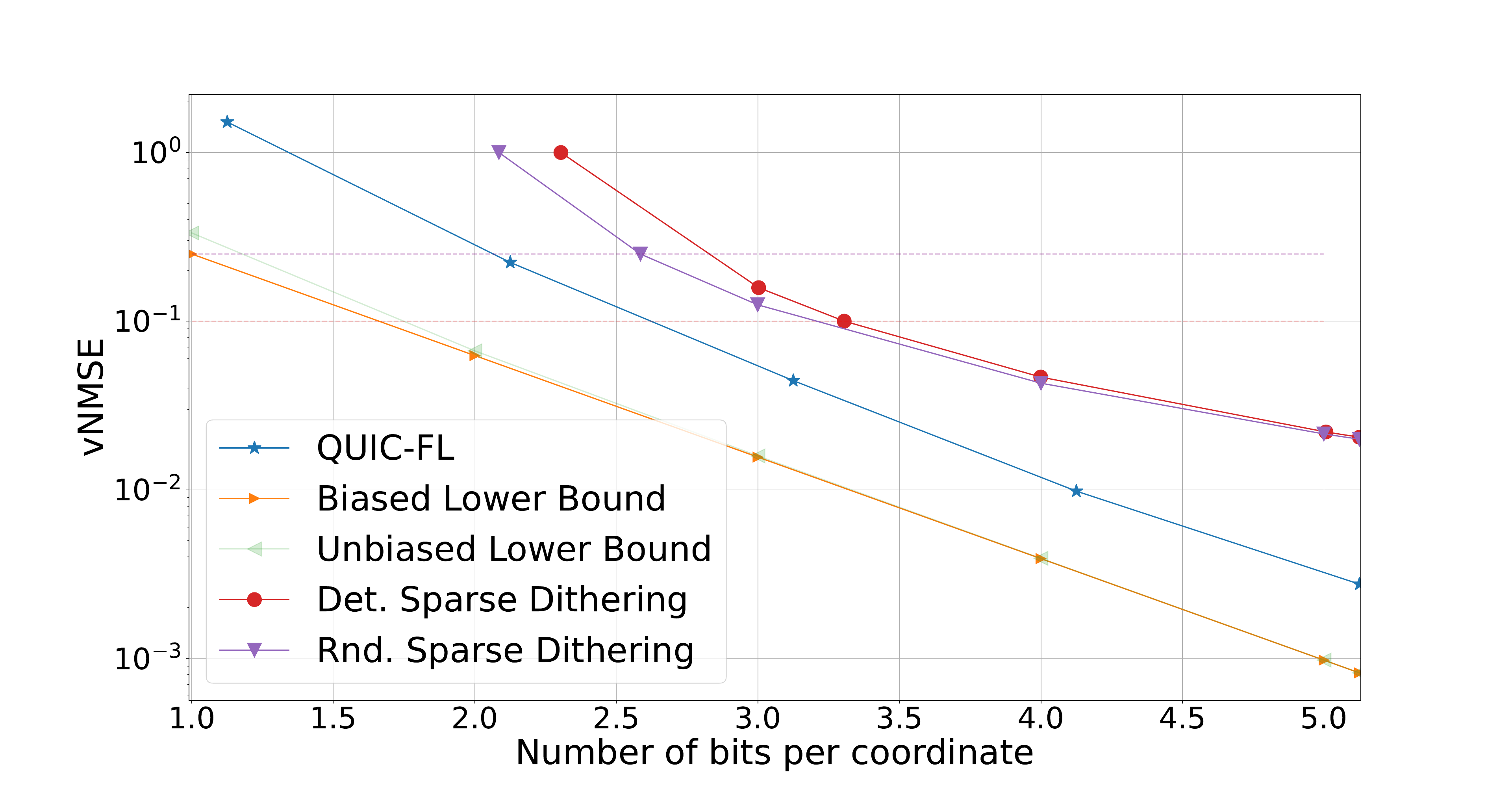}
\vspace{-3mm}
\caption{Comparison with Sparse Dithering.}
\vspace{-2mm}
\label{fig:Compairson_with_SD}
\end{figure*}

\subsection{Comparison with Sparse Dithering}\label{app:sd}
We compare \alg with Sparse Dithering (SD)~\cite{albasyoni2020optimal}.
As shown in Figure~\ref{fig:Compairson_with_SD}, \alg is markedly more accurate for the range of bit budgets ($b\in\set{1,2,3,4,5}$) that it supports. The figure includes both the deterministic and randomized versions of SD.

The markers mark the evaluated points. QUIC-FL is configured with $p=2^{-9}$, and thus its per-coordinate bandwidth is non-integer to factor in the coordinates sent exactly.

Further, our algorithm is proven to be GPU friendly, while we cannot determine whether the components of the Sparse Dithering algorithm can be efficiently implemented. The paper does not include a runtime evaluation that we can compare with.








\end{document}